\DeclareMathOperator*{\argmax}{arg\,max}
\newtheorem{theorem}{Theorem}
\newtheorem{corollary}{Corollary}
\newtheorem{lemma}{Lemma}
\newtheorem{remark}{Remark}
\title{Combinatorial Stochastic-Greedy Bandit}
\author {
    Fares Fourati\textsuperscript{\rm 1},
    Christopher John Quinn\textsuperscript{\rm 2},
    Mohamed-Slim Alouini\textsuperscript{\rm 1},
    Vaneet Aggarwal\textsuperscript{\rm 3, \rm 1}
}
\begin{document}

\maketitle

\begin{abstract}
We propose a novel combinatorial stochastic-greedy bandit (SGB) algorithm for combinatorial multi-armed bandit problems when no extra information other than the joint reward of the selected set of $n$ arms at each time step $t\in [T]$ is observed. SGB adopts an optimized stochastic-explore-then-commit approach and is specifically designed for scenarios with a large set of base arms. Unlike existing methods that explore the entire set of unselected base arms during each selection step, our SGB algorithm samples only an optimized proportion of unselected arms and selects actions from this subset. We prove that our algorithm achieves a $(1-1/e)$-regret bound of $\mathcal{O}(n^{\frac{1}{3}} k^{\frac{2}{3}} T^{\frac{2}{3}} \log(T)^{\frac{2}{3}})$ for monotone stochastic submodular rewards, which outperforms the state-of-the-art in terms of the cardinality constraint $k$. Furthermore, we empirically evaluate the performance of our algorithm in the context of online constrained social influence maximization. Our results demonstrate that our proposed approach consistently outperforms the other algorithms, increasing the performance gap as $k$ grows.
\end{abstract}

\section{Introduction}

The stochastic multi-armed bandits (MAB) problem involves selecting an arm in each round $t$ and observing a reward that follows an unknown distribution. The objective is to maximize the accumulated reward within a finite time horizon $T$. Solving the classical MAB problem requires striking a balance between exploration and exploitation. Should the agent try arms that have been explored less frequently to gather more information (exploration), or should it stick to arms that have yielded higher rewards based on previous observations (exploitation)? An extension of the MAB problem is the combinatorial MAB problem, where, instead of choosing a single arm per round, the agent selects a set of multiple arms and receives a joint reward for that set. When the agent only receives information about the reward associated with the selected set of arms, it is known as \textit{full-bandit feedback} or simply \textit{bandit feedback}.
On the other hand, if the agent obtains additional information about the contribution of each arm to the overall reward, it is referred to as \textit{semi-bandit feedback}. The full-bandit feedback setting poses a more significant challenge as the decision-maker has significantly less information than the semi-bandit feedback scenario \citep{fourati2023randomized}. This paper focuses on the first scenario for combinatorial MAB, i.e., the bandit feedback setting.

In recent years, there has been growing interest in studying combinatorial multi-armed bandit problems with submodular\footnote{A set function $f: 2^{\Omega} \rightarrow \mathbb{R}$ defined on a finite set $\Omega$ is considered submodular if it exhibits the property of diminishing returns: for any $A \subseteq B \subset \Omega$ and $x \in \Omega \backslash B$, it holds that $f(A \cup{x})-f(A) \geq f(B \cup{x})-f(B)$.} reward functions \citep{49310, nie2022explore, fourati2023randomized}. The submodularity assumption finds motivation in various real-world scenarios. For instance, opening additional supermarkets in a specific location would result in diminishing returns due to market saturation. As a result, submodular functions are commonly used as objective functions in game theory, economics, and optimization \citep{fourati2023randomized}. Submodularity arises in important contexts within combinatorial optimization, such as graph cuts \citep{goemans1995improved, iwata2001combinatorial}, rank functions of matroids \citep{edmonds2003submodular}, and set covering problems \citep{feige1998threshold}. Some key problems where combinatorial multi-armed bandit problems with submodular reward functions include proposing items with redundant information \citep{Qin2013PromotingDI,  takemori2020submodular}, optimizing client participation in federated learning \citep{balakrishnan2022diverse,fourati2023filfl}, and social influence maximization without knowledge of the network or diffusion model \citep{ wen2017online, li2020online, perrault2020budgeted,agarwal2021stochastic, nie2022explore}.

Similar to previous works \citep{streeter2008online, golovin2014online, niazadeh2021online, agarwal2021dart, agarwal2021stochastic, nie2022explore}, we assume that the reward function is non-decreasing (monotone) in expectation. Without further constraints, the optimal set will contain all the arms in this setup. Thus, we limit the cardinality of the set to $k$. Recently, \citep{nie2022explore,nie2023framework} studied this problem and proposed an explore-then-commit greedy (ETCG) algorithm for this problem with full-bandit feedback and showed a $(1-1/e)$-regret bound of $\tilde{\mathcal{O}}(n^\frac{1}{3}kT^\frac{2}{3})$, where $n$ is the number of arms. The algorithm follows a greedy explore-then-commit approach that greedily adds base arms to a super arm (a subset of base arms) until the cardinality constraint is satisfied. It then exploits this super arm for the remaining time. To determine which base arm to add to the super arm, the remaining arms are sampled $m$ times each (where $m$ is a hyper-parameter), and the arm with the highest average reward is chosen. However, in practical scenarios with many arms, exploring all remaining arms in each iteration may require a significant time and thus is unsuitable for smaller $T$. Therefore, we propose a modified approach where a smaller subset of arms is randomly selected for exploration in each iterative round, and the arm with the highest reward is chosen. 

It is worth noting that a similar random selection-based algorithm has been considered in \citep{mirzasoleiman2015lazier} for the offline setup, providing a $(1-1/e-\epsilon)$-approximation guarantee, where $\epsilon$ determines the reduction in the number of arms selected in each iteration. However, although beneficial for exploration, this sub-selection of arms in each iteration leads to suboptimal approximation guarantees. In this paper, we ask the question: ``{\em Can exploring a subset of arms in each iteration achieve a benefit regarding the $(1-1/e)$-regret bound compared to selecting all remaining arms?}"

We answer this question in a positive. By carefully selecting the parameter $\epsilon$, we achieve a $(1-1/e)$-regret bound of $\tilde{\mathcal{O}}(n^{\frac{1}{3}} k^{\frac{2}{3}} T^{\frac{2}{3}})$. This improvement in regret bound surpasses that of \citep{nie2022explore, nie2023framework} by orders of magnitude in terms of $k$. This improvement is particularly significant for larger values of $k$. Our proposed approach reduces exploration while enhancing expected cumulative $(1-1/e)$-regret performance.

\subsection{ Contributions}

We present the main contributions of this paper as follows:

\begin{itemize}
\item We introduce stochastic-greedy bandit (SGB), a novel technique in the explore-then-commit greedy strategy with bandit feedback, wherein an optimized proportion of remaining arms are randomly sampled in each greedy iteration. More precisely, rather than sampling $(n-i+1)$ arms in greedy iteration $i$, random $(n-i+1)\min\{1, \log(1/\epsilon)/k\}$ arms are chosen for an appropriately selected $\epsilon = {\mathcal O}(n^{\frac{1}{3}}k^{\frac{2}{3}}T^{-\frac{1}{3}}\log(T)^{-\frac{1}{3}})$, which reduces the amount of exploration.

\item We provide theoretical guarantees for SGB by proving that it achieves an expected cumulative $(1-1/e)$-regret of at most $\tilde{\mathcal{O}}(n^{\frac{1}{3}} k^{\frac{2}{3}} T^{\frac{2}{3}})$ for monotone stochastic submodular rewards. This represents an  improvement of $k^{\frac{1}{3}}$ compared to the previous state-of-the-art method \citep{nie2023framework}.

\item We conduct empirical experiments to evaluate the performance of our proposed SGB algorithm compared to the previous state-of-the-art algorithms specialized in monotone stochastic submodular rewards \citep{nie2022explore, nie2023framework}. We specifically focus on the online social influence maximization problem and demonstrate the efficiency of SGB in achieving superior results in terms of cumulative regret. In particular, the results show that the proposed algorithm outperforms the baselines, with the performance gap increasing as $k$ increases. 

\end{itemize}

\subsection{Related Work} \label{sec:related-work}

\begin{table}
\centering
\resizebox{\columnwidth}{!}{%
\begin{tabular}{lcc}%
\toprule
%
%
%
Reference &  Stochastic  & $(1-1/e)$-Regret  \\
\midrule
\cite{streeter2008online}   &    &   $\tilde{\mathcal{O}}(\ n^\frac{1}{3}\ k^2\ T^\frac{2}{3}\ )$  \\
\cite{golovin2014online}    & &   $\tilde{\mathcal{O}}(\ n^\frac{2}{3}\ k^\frac{2}{3}\ T^\frac{2}{3}\ )$  \\
\cite{niazadeh2021online}     &   &   $\tilde{\mathcal{O}}(\ n^\frac{2}{3}\ k\phantom{\frac{3}{3}}\ T^\frac{2}{3}\ )$   \\
 \cite{nie2022explore}  & \checkmark &  $\tilde{\mathcal{O}}(\ n^\frac{1}{3}\ k^\frac{4}{3}\ T^\frac{2}{3}\ )$    \\
 \cite{nie2023framework}  & \checkmark  &  $\tilde{\mathcal{O}}(\ n^\frac{1}{3}\ k\phantom{^\frac{4}{3}}\ T^\frac{2}{3}\ )$    \\
SGB($\epsilon^\star$) (ours)  &  \checkmark &  $\tilde{\mathcal{O}}(\ n^\frac{1}{3}\ k^{\frac{2}{3}} \ T^\frac{2}{3}\ )$    \\
\bottomrule
\end{tabular}
}
\caption{Table of selected related works for submodular monotone function maximization under full-bandit feedback. The second column indicates the stochastic or adversarial setting, and the last column gives the expected cumulative $(1-1/e)$-regret bounds. The notation $\tilde{\mathcal{O}}(\cdot)$ drops the $\log$ terms.}
\label{tab:related-work}
\end{table}

This section discusses the works closely related to the problem we are investigating. Multi-armed bandits have been considered in two different settings: \textit{adversarial setting}, where an adversary produce a reward succession that may be affected by the agent's prior decisions \citep{auer2002nonstochastic}, and a \textit{stochastic setting}, where the reward of each action is randomly drawn from a specific distribution, as described in \citep{auer2002finite}. In this work, we focus on stochastic reward functions. Standard multi-armed bandits find adversarial settings more challenging, and the outcome can be immediately used as one workable method for the stochastic scenario \citep{lattimore2020bandit}. However, this is different for submodular bandits. While adversarial environment in adversarial bandits selects a series of submodular functions $\left\{f_1,\cdots, f_T \right\}$ \citep{streeter2008online, golovin2014online, pmlr-v75-roughgarden18a, 49310}, in the submodular stochastic setting the realizations of the stochastic function in the problem we define need not be submodular, it only needs to be submodular in expectation  \citep{fourati2023randomized}, meaning the stochastic setting is not a particular case of the adversarial setting.

Submodular maximization has been proven to be NP-hard. Even achieving an approximation ratio of $\alpha\in (1-1/e,1]$ under a cardinality constraint with access to a monotone submodular value oracle is also NP-hard \citep{nemhauser1978analysis,feige1998threshold,feige2011maximizing}. However, \citep{nemhauser1978analysis} proposed a simple greedy $(1-1/e)$-approximation algorithm for monotone submodular maximization under a cardinality constraint. Therefore, the best approximation ratio for monotone submodular objectives with a polynomial time algorithm is $1-1/e$. Thus, we study $(1-1/e)$-regret combinatorial MAB algorithms in this paper.

Table \ref{tab:related-work} enumerates related combinatorial works with monotone submodular reward function under bandit feedback for both adversarial and stochastic settings. The table summarizes that the proposed approach achieves the state-of-the-art $(1-1/e)$-regret result. Even though we consider stochastic submodular rewards, full-bandit feedback has been studied for non-submodular rewards, including linear reward functions \citep{dani2008stochastic,rejwan2020top} and Lipschitz reward functions \cite{agarwal2021dart,agarwal2021stochastic}. In these works, the optimal action (best set of $k$ arms) is to use the $k$ individually best arms; that property does not hold for submodular rewards. Further, non-monotone submodular functions with bandit feedback without cardinality constraint have been studied in \citep{fourati2023randomized}, where $\frac{1}{2}$-regret is derived. However, this algorithm cannot be directly applied to our setup since it lacks a cardinality constraint.

Recently, \citep{nie2023framework} provided a framework that adapts offline algorithms for combinatorial optimization with a robustness guarantee to online algorithms with provable regret guarantees. We could use this framework for the offline approximation algorithm described in \citep{mirzasoleiman2015lazier} for the problem. At the same time, we note that such an approach will result in $(1-1/e-\epsilon)$-approximation because the offline algorithm has $(1-1/e-\epsilon)$ guarantee. Thus, exploring only a subset of arms in each iteration and achieving a $(1-1/e)$-regret is non-trivial and requires careful analysis of the algorithm, which is done in this paper. 

\section{Problem Statement}
\label{prob_state}

In this section, we present the problem formally. We denote $\Omega$, the ground set of base arms which includes $n$ base arms. We consider decision-making problems with a fixed time horizon $T$, where the agent, at each time step $t$, chooses an action $S_t \subseteq \Omega$, with maximum cardinality constraint $k$. Let $\mathcal{S}=\{S | S \subseteq \Omega  \text{ and } |S|\leq k\}$ represent the set of all permitted subsets at any time step. 

After deciding the action $S_t$, the agent acquires reward $f_t(S_t)$. We assume the reward $f_t$ is stochastic, bounded in $[0,1]$, i.i.d. conditioned on a given action, submodular in expectation\footnote{A stochastic set function $f: 2^{\Omega} \rightarrow \mathbb{R}$ defined on a finite set $\Omega$ is considered submodular in expectation if for all $A \subseteq B \subset \Omega$, and $x \in \Omega \backslash B$, we have,
 \begin{equation}
     \mathbb{E}[f\left(A \cup\{x\}\right)]-\mathbb{E}[f(A)] \geq \mathbb{E}[f(B \cup\{x\})]-\mathbb{E}[f(B)].
 \end{equation}}, and monotonically non-decreasing in expectation\footnote{A stochastic set function $f: 2^{\Omega} \rightarrow \mathbb{R}$ is called non-decreasing in expectation if for any $A \subseteq B \subseteq \Omega$ we have  $\mathbb{E}[ f(A)] \leq \mathbb{E}[f(B)]$. }. The goal of the agent is to maximize the cumulative reward 
$\sum_{t=1}^Tf_t(S_t)$. Define the expected reward function as 
$f(S) = \mathbb{E}[f_t(S)]$, hence 
$S^{\star}=\argmax
_{S:|S|\leq k}f(S)$
denote the optimal solution in expectation. One common metric to measure the algorithm's performance is to compare the learner to an agent that has and always chooses the optimal set in expectation $S^{\star}$. 

The best approximation ratio for monotone-constrained submodular objectives with a polynomial time algorithm is $1-1/e$ \citep{nemhauser1978analysis}. Therefore, we compare the learner's cumulative reward to $(1-1/e)Tf(S^{\star})$, and we denote the difference as the ($1-1/e$)-regret, which is defined as follows
\begin{equation}
    \mathcal{R}(T) = (1-\frac{1}{e})Tf(S^{\star}) - \sum_{t=1}^T f_t(S_t). \label{eq:reg:1e}
\end{equation}  

Note that the ($1-1/e$)-regret $\mathcal{R}(T)$ is random and depends on the subsets chosen. In this work, we focus on minimizing the expected cumulative $(1-1/e)$-regret 
\begin{align}
    \hspace{-.1cm}\mathbb{E}[\mathcal{R}(T)] = (1-\frac{1}{e})Tf(S^{\star}) - \mathbb{E}\left[\sum_{t=1}^T f_t(S_t)\right],\label{eq:reg:exp1e}
\end{align} 
where the expectation is over both the environment and the sequence of actions.

\section{Proposed Algorithm}
\label{sec:proposedalg}

This section presents our proposed combinatorial stochastic-greedy bandit (SGB) algorithm that applies our optimized stochastic-explore-then-commit approach. We provide its pseudo-code in Algorithm~\ref{alg:sgb}. 

\begin{algorithm}
\caption{SGB}
\label{alg:sgb}
\begin{algorithmic}
\Require ground set $\Omega$, horizon $T$ , cardinality $k$ \\
$S^{(0)} \leftarrow \emptyset$, $n \leftarrow|\Omega|$ \\
$m \leftarrow \left \lceil  \left(\frac{kT }{2n \sqrt{\log (T)}}\right)^{\frac{2}{3}} \right \rceil$, $\epsilon \leftarrow \left(\frac{n k^2 }{4 T \log (T)}\right)^{\frac{1}{3}}$\\
$\beta \leftarrow \frac{\log(\frac{1}{\epsilon})}{k}$, $s_1 \leftarrow n\min\left\{1,\beta\right\} $
\For{phase $i \in\{1, \ldots, k\}$}
    \State $\mathcal{A}_i\leftarrow$ $s_i$ elements sampled from $\Omega \backslash S^{(i-1)}$
    \For{arm $a \in \mathcal{A}_i$}
        \State Play $S^{(i-1)} \cup\{a\}$ $m$ times
        \State Calculate the mean $\bar{f}\left(S^{(i-1)} \cup\{a\}\right)$
    \EndFor
    \State $a_i \leftarrow \arg \max _{a \in \mathcal{A}_i} \bar{f}\left(S^{(i-1)} \cup\{a\}\right)$
    \State $S^{(i)} \leftarrow S^{(i-1)} \cup\left\{a_i\right\}$
    \State $s_{i+1} \leftarrow (n - i + 1 )\min\{1,\beta\}$
\EndFor
\For{remaining time} 
    \State Play $S^{(k)}$
\EndFor
\end{algorithmic}
\end{algorithm}

The algorithm follows the explore-then-commit structure where base arms are added
to a super arm over time greedily until the cardinality constraint is satisfied and then exploits that super arm. However, in contrast to previous explore-then-commit approaches in \citep{nie2022explore,nie2023framework}, which at every exploration phase has a search space of $\mathcal{O}(n)$, to minimize its expected cumulative regret, SGB reduces its search space to $\mathcal{O}(\frac{n}{k} \min\{k,\log (\frac{1}{\epsilon})\})$ arms. The aim of reducing the searched arms in each iteration is to reduce the time spent in the exploration.

Let $S^{(i)}$ represent the super arm when $i<k$ base arms are selected. Our algorithm starts with the empty set, $S^{(0)}=\emptyset$. To add an arm to the set $S^{(i-1)}$, ETCG explores the full subset $\Omega\setminus S^{(i-1)}$. Instead, our procedure explores a smaller subset, i.e., a random subset $\mathcal{A}_i \subseteq \Omega\setminus S^{(i-1)}$. With $\beta = \log(\frac{1}{\epsilon})/k$, the cardinality of $\mathcal{A}_i$ is
\begin{equation}
    |\mathcal{A}_i| = s_i =  (n - i + 1 )\min\{1,\beta\}.
\end{equation} 

For $\beta < 1$, during each exploration phase $i$, while ETCG explores $(n - i + 1)$ arms, SGB only explores $(n - i + 1)\beta$ arms. Therefore, SGB requires fewer oracle queries per exploration phase than ETCG. For $\beta \geq 1$, during each exploration phase $i$, SGB explores $(n - i + 1)$ arms, leading it to become a deterministic greedy algorithm and recover the same results as ETCG. We note that $\beta<1$ happens when $\epsilon > e^{-k}$. Therefore, the lower bound on $\epsilon$ exponentially decreases as a function of $k$, ensuring this is true for most instances. To minimize the cumulative regret, $\epsilon$ is optimized as a function of $n$, $k$, and $T$.

Let $T_i$ denote the time step when  phase $i$ finishes, for $i \in \{1,\cdots, k\}$.  We also denote $T_0=0$ and $T_{k+1}=T$ for notational consistency. Let $\bar{f}_t(S)$ denote the empirical mean reward of set $S$ up to and including time $t$. Let \[\mathcal{S}_{i} = \{\  S^{(i-1)}\cup\{a\} : \ a\in \mathcal{A}_i, \mathcal{A}_i \subseteq \Omega\setminus S^{(i-1)}  \ \}\] denote the set of actions considered during phase $i$.  Each action comprises the super arm $S^{(i-1)}$ decided during the last phase and an additional base arm. Each action $S\in \mathcal{S}_{i}$ is played the same number of times; let $m$ denote that number. The choice of $m$ will be optimized later to minimize regret. At the end of phase $i\in\{1,\dots,k\}$, SGB selects the action that has the largest empirical mean,
\begin{align}
    a_{i} = \argmax_{a\in \mathcal{A}_i } \ \bar{f}_{T_{i}}(S^{(i-1)} \cup \{a\}), \label{eq:emp_best}
\end{align} and include it in the super arm $S^{(i)} = S^{(i-1)} \cup \{a_{i}\}$.  
During the final phase, the algorithm exploits $S^{(k)}$; it plays the same action $S_t = S^{(k)}$ for $t\in \{T_k+1,\cdots, T\}$. 

Similar to previous state-of-the-art approaches, SGB has low storage complexity. During exploitation,  for $t\in \{T_{k}+1,\cdots, T_{k+1}\}$, only the indices of the $k$ base arms are stored, and no additional computation is required. During exploration, for $t\in \{1,\cdots, T_{k}\}$, for every phase $i$, SGB needs to store the highest empirical mean and its associated base arm $a\in \mathcal{A}_i$. Therefore, SGB has $\mathcal{O}(k)$ storage complexity. In comparison, the algorithm suggested by \cite{streeter2008online} for the full-bandit adversarial environment has a storage complexity of $\mathcal{O}(nk)$.

We note that the reduction of exploration time through random subset sampling from the remaining arms comes at the expense of reduced offline approximation guarantee to $(1-1/e-\epsilon)$ in \citep{mirzasoleiman2015lazier}. Thus, it is apriori unclear if such an approach can maintain the online $(1-1/e)$-regret guarantees with the reduced exploration, which is studied in the next section.

\section{Regret Analysis}
\label{sec:regretanalysis}
This section analyses the regret for Algorithm \ref{alg:sgb}. We begin by stating the main theorem, which bounds the expected cumulative $(1-1/e)$-regret of SGB.

\begin{theorem}
\label{thm1}
For the decision-making problem defined in Section 2 with $T \geq n (k+1) \sqrt{\log(T)}$, the expected cumulative $(1-1 / e)$-regret of SGB is at most $\mathcal{O}(n^{\frac{1}{3}} k^{\frac{2}{3}} T^{\frac{2}{3}} \log(T)^{\frac{2}{3}})$.
\end{theorem}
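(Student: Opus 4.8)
I would use the explore-then-commit decomposition, with the greedy step analyzed ``stochastically in expectation.'' Split $[T]$ into the $T_k:=m\sum_{i=1}^k s_i$ exploration rounds and the $T-T_k$ exploitation rounds (no exploitation if $T_k\ge T$). Each per-round $(1-1/e)$-regret is at most $(1-1/e)f(S^\star)\le 1$, so the exploration rounds add at most $T_k$; and for $\beta<1$, $\sum_{i=1}^k s_i\le\beta\sum_{i=1}^k(n-i+1)+k\le\beta nk+k=n\log(1/\epsilon)+k$, so this part is $\le m\big(n\log(1/\epsilon)+k\big)$. Since $S^{(k)}$ is measurable w.r.t.\ the exploration rounds while the rewards after $T_k$ are fresh, the exploitation rounds add exactly $(T-T_k)\big[(1-1/e)f(S^\star)-\mathbb{E}[f(S^{(k)})]\big]\le T\big[(1-1/e)f(S^\star)-\mathbb{E}[f(S^{(k)})]\big]$. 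So it suffices to show the exploitation gap is $\mathcal{O}(\epsilon+k\gamma)$, where $\gamma:=\sqrt{\log(2n)/(2m)}$, and then substitute the prescribed $m,\epsilon$.

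For the gap I would run a one-step bound per greedy phase $i$, conditioning on $\mathcal F_{i-1}$ (everything through phase $i-1$), which fixes $A:=S^{(i-1)}$; set $N_i:=|\Omega\setminus A|=n-i+1$, $q:=|S^\star\setminus A|\le k$, and $\Delta_A(a):=f(A\cup\{a\})-f(A)$. Sampling $s_i=N_i\beta$ elements from $\Omega\setminus A$ misses $S^\star\setminus A$ with probability at most $(1-\beta)^q\le e^{-q\beta}=\epsilon^{q/k}$; conditioned on $\mathcal A_i$ hitting $S^\star\setminus A$, a uniformly random element of $\mathcal A_i\cap(S^\star\setminus A)$ is (by symmetry) uniform on $S^\star\setminus A$, so, with submodularity and monotonicity,
\[
\mathbb{E}\Big[\max_{a\in\mathcal A_i}\Delta_A(a)\ \Big|\ \mathcal F_{i-1}\Big]\ \ge\ \big(1-\epsilon^{q/k}\big)\frac{1}{q}\sum_{v\in S^\star\setminus A}\Delta_A(v)\ \ge\ \frac{1-\epsilon^{q/k}}{q}\big(f(S^\star)-f(A)\big).
\]
Because $x\mapsto(1-e^{-\beta x})/x$ is decreasing and $q\le k$, $\tfrac{1-\epsilon^{q/k}}{q}\ge\tfrac{1-\epsilon}{k}$; taking expectations, the (oracle-free) maximizer $a_i^\star:=\argmax_{a\in\mathcal A_i}f(A\cup\{a\})$ satisfies $\mathbb{E}[\Delta_A(a_i^\star)]\ge\tfrac{1-\epsilon}{k}\,\mathbb{E}[f(S^\star)-f(A)]$. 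Since SGB instead commits $a_i=\argmax_{a\in\mathcal A_i}\bar f_{T_i}(A\cup\{a\})$ from $m$ fresh samples per arm, Hoeffding plus a sub-Gaussian maximal inequality over the $\le n$ arms of $\mathcal A_i$ give $\mathbb{E}\big[\max_{a\in\mathcal A_i}|\bar f_{T_i}(A\cup\{a\})-f(A\cup\{a\})|\big]\le\gamma$, and the pointwise inequality $\Delta_A(a_i)\ge\Delta_A(a_i^\star)-2\max_{a\in\mathcal A_i}|\bar f_{T_i}-f|$ yields $\mathbb{E}[\Delta_A(a_i)]\ge\tfrac{1-\epsilon}{k}\,\mathbb{E}[f(S^\star)-f(A)]-2\gamma$.

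With $e_i:=\mathbb{E}[f(S^\star)-f(S^{(i)})]$ and $f(S^{(i)})=f(A)+\Delta_A(a_i)$ this is the recursion $e_i\le\big(1-\tfrac{1-\epsilon}{k}\big)e_{i-1}+2\gamma$, $e_0\le f(S^\star)\le1$; unrolling over the $k$ phases gives $e_k\le e^{-(1-\epsilon)}f(S^\star)+\tfrac{2k\gamma}{1-\epsilon}$, hence for $\epsilon\le\tfrac12$ (using $e^{-(1-\epsilon)}\le e^{-1}+\epsilon$) the exploitation gap is $\le\epsilon+\mathcal{O}(k\gamma)$. Combining,
\[
\mathbb{E}[\mathcal R(T)]\ \le\ m\big(n\log(1/\epsilon)+k\big)\ +\ T\epsilon\ +\ \mathcal{O}(Tk\gamma),
\]
and substituting $m=\Theta\big((kT/(n\sqrt{\log T}))^{2/3}\big)$, $\epsilon=\Theta\big((nk^2/(T\log T))^{1/3}\big)$, $\gamma=\Theta\big(\sqrt{\log n}\,(n\sqrt{\log T}/(kT))^{1/3}\big)$ makes each term $\mathcal{O}\big(n^{1/3}k^{2/3}T^{2/3}(\log T)^{2/3}\big)$, using $k\le n$ and $n\le T$ (the latter from the hypothesis). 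The remaining cases are benign: if $\beta\ge1$ the sampling is exhaustive ($\mathcal A_i=\Omega\setminus S^{(i-1)}$), the miss probability is $0$, and the same recursion holds with $\epsilon$ replaced by $0$ (there $k\le\log(1/\epsilon)\le\log T$, which keeps $mnk$ within the bound); and the genuinely small-$T$ regimes ($\epsilon>\tfrac12$, or exploration not fitting in $T$) all force $T=\mathcal{O}(nk^2(\log T)^2)$, where the trivial bound $\mathcal R(T)\le T$ already matches the claim.

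\textbf{Expected main obstacle.} The heart of the argument is the second paragraph: turning the offline $(1-1/e-\epsilon)$ guarantee of \citep{mirzasoleiman2015lazier} into a per-phase, in-expectation bound that simultaneously (a) survives the \emph{shrinking} sample size $s_i=(n-i+1)\beta$---the key being that only the ratio $s_i/N_i\equiv\beta$ controls the miss probability, after which monotonicity of $(1-e^{-\beta x})/x$ restores the clean $(1-\epsilon)/k$ factor for all $q\le k$---and (b) absorbs the extra $\mathcal{O}(\gamma)$ error from committing via empirical means rather than exact values, all while keeping the conditioning on $S^{(i-1)}$ consistent so the telescoping recursion for $e_i$ is valid. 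After that, the rest is routine explore-then-commit accounting and the arithmetic of the parameter choices.
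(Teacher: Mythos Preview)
Your proposal is correct and shares the paper's overall skeleton---explore-then-commit decomposition, a per-phase greedy-improvement inequality, and the geometric recursion for $e_i=\mathbb{E}[f(S^\star)-f(S^{(i)})]$---but the mechanics differ in two substantive ways. First, the paper controls the empirical-mean error via a high-probability ``clean event'' $\mathcal{E}$ (Hoeffding plus a union bound giving $\mathrm{rad}=\sqrt{\log(T)/m}$ and $\mathbb{P}(\bar{\mathcal{E}})\le 2/T$), then splits $\mathbb{E}[\mathcal{R}(T)]$ over $\mathcal{E}$ and $\bar{\mathcal{E}}$; you instead stay in expectation throughout, bounding $\mathbb{E}\big[\max_{a\in\mathcal A_i}|\bar f-f|\big]$ by the sub-Gaussian maximal inequality with $\gamma=\sqrt{\log(2n)/(2m)}$. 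Second, where the paper simply invokes Lemma~2 of \cite{mirzasoleiman2015lazier} to get the $(1-\epsilon)/k$ factor, you actually derive it: the miss probability $\le(1-\beta)^q\le\epsilon^{q/k}$, the symmetry argument making a random hit uniform on $S^\star\setminus A$, and the monotonicity of $x\mapsto(1-e^{-\beta x})/x$ to pass from $q$ to $k$. Your route is arguably cleaner: it sidesteps the slightly delicate interaction in the paper between the clean event (which depends on the realized $\mathcal A_i$'s) and the \emph{expected}-value nature of the mirzasoleiman lemma---the paper's Lemma~1 and Corollary~1 are stated as pointwise inequalities ``under $\mathcal{E}$'' but are really expectation bounds over the random subsampling. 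It also yields a marginally sharper log factor ($\sqrt{\log n}$ in place of $\sqrt{\log T}$ inside $\gamma$), though since $n\le T$ under the hypothesis both collapse to the same $\mathcal{O}\big(n^{1/3}k^{2/3}T^{2/3}\log(T)^{2/3}\big)$. The paper's clean-event approach, on the other hand, is the more standard bandit template and makes the exploitation bound hold with high probability, not just in expectation.
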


The rest of the section provides the proof of this result. 

Since for each phase $i$, we select each action $S^{(i-1)} \cup\{a\} \in$ $\mathcal{S}_i$ exactly $m$ times, we consider the equal-sized confidence radii rad $=\sqrt{\log (T) / m}$ for all the actions $S^{(i-1)} \cup\{a\} \in \mathcal{S}_i$ at the end of phase $i$. Denote the event that the empirical means of actions played in phase $i$ are concentrated around their statistical means as
\begin{equation}
\label{emp:conce}
   \mathcal{E}_i=\bigcap_{S \cup\{a\} \in \mathcal{S}_i}\{|\bar{f}(S \cup\{a\})-f(S \cup\{a\})|<\operatorname{rad}\} \text {. } 
\end{equation}
Then we define the clean event $\mathcal{E}$ to be the event that the empirical means of all actions selected up to and including phase $k$ is within rad of their corresponding statistical means:
$$
\mathcal{E}=\mathcal{E}_1 \cap \cdots \cap \mathcal{E}_k .
$$
Although the $\mathcal{E}_i$ 's are not independent, by conditioning on the sequence of played subsets $\left\{S^{(0)}, S^{(1)}, \ldots, S^{(k)}\right\}$ and using the Hoeffding bound \citep{hoeffding1994probability}, we show in the Appendix that $\mathcal{E}$ happens with high probability. We use the concentration of empirical means, Equation (\ref{emp:conce}), and properties of submodularity to show the following result.

\begin{lemma}
\label{lemma1}
Under the clean event $\mathcal{E}$, for all $i \in$ $\{1,2, \cdots, k\}$, for all positive $\epsilon$,
$$
\begin{aligned}
f(S^{(i)})-f(S^{(i-1)})
\geq \frac{1-\epsilon}{k}(f(S^{\star})-f(S^{(i-1)}))-2 \operatorname{rad}.
\end{aligned}
$$
\end{lemma}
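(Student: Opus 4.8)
The plan is to mimic the classical analysis of the stochastic-greedy algorithm of \citep{mirzasoleiman2015lazier}, but carried out at the level of the \emph{expected} reward function $f$ and with an additive slack of $2\operatorname{rad}$ coming from the concentration event $\mathcal{E}$. Fix a phase $i$ and condition on the super arm $S^{(i-1)}$ already constructed. The key deterministic fact about the random subset $\mathcal{A}_i$ is that it has size $s_i=(n-i+1)\min\{1,\beta\}$ and is drawn uniformly from $\Omega\setminus S^{(i-1)}$, which has $n-i+1$ elements. First I would establish the \emph{sampling lemma}: if we list the marginal gains $f(S^{(i-1)}\cup\{a\})-f(S^{(i-1)})$ over $a\in\Omega\setminus S^{(i-1)}$ and let $a_i^{\mathrm{g}}$ be the element maximizing this gain over the full remaining set, then because $\mathcal{A}_i$ is a uniform random subset of the stated size, with probability at least $1-(1-\beta)^{?}\ge 1-e^{-\beta(n-i+1)}$ — more precisely, bounding the probability that $\mathcal{A}_i$ misses a distinguished set of high-gain elements — one shows that $\mathbb{E}[\max_{a\in\mathcal{A}_i}(\text{gain of }a)]\ge(1-\epsilon)\cdot(\text{average gain over }S^\star\setminus S^{(i-1)})$. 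Concretely, I would choose the "good'' set to be $S^\star\setminus S^{(i-1)}$ (size at most $k$) and show $\Pr[\mathcal{A}_i\cap(S^\star\setminus S^{(i-1)})=\emptyset]\le(1-s_i/(n-i+1))^{k}\le e^{-\beta k}=\epsilon$ by the choice $\beta=\log(1/\epsilon)/k$; when $\mathcal{A}_i$ does hit $S^\star\setminus S^{(i-1)}$, the best element in $\mathcal{A}_i$ has gain at least the gain of that hit element.

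Next I would convert gains of individual optimal elements into the $\frac1k(f(S^\star)-f(S^{(i-1)}))$ term. By monotonicity and submodularity of $f$ (in expectation), for any $a^\star\in S^\star\setminus S^{(i-1)}$ summed appropriately, $\sum_{a^\star\in S^\star\setminus S^{(i-1)}}\big(f(S^{(i-1)}\cup\{a^\star\})-f(S^{(i-1)})\big)\ge f(S^\star\cup S^{(i-1)})-f(S^{(i-1)})\ge f(S^\star)-f(S^{(i-1)})$, so the \emph{average} marginal gain of an optimal element is at least $\frac1k(f(S^\star)-f(S^{(i-1)}))$. Combining with the sampling bound, the expected gain of the \emph{true best} arm in $\mathcal{A}_i$ is at least $\frac{1-\epsilon}{k}(f(S^\star)-f(S^{(i-1)}))$. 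The final ingredient is the clean event: under $\mathcal{E}$, the empirically chosen arm $a_i$ in \eqref{eq:emp_best} satisfies $f(S^{(i-1)}\cup\{a_i\})\ge\bar f(S^{(i-1)}\cup\{a_i\})-\operatorname{rad}\ge\bar f(S^{(i-1)}\cup\{a\})-\operatorname{rad}\ge f(S^{(i-1)}\cup\{a\})-2\operatorname{rad}$ for every $a\in\mathcal{A}_i$, in particular for the gain-maximizing $a$; hence $f(S^{(i)})-f(S^{(i-1)})\ge\max_{a\in\mathcal{A}_i}(f(S^{(i-1)}\cup\{a\})-f(S^{(i-1)}))-2\operatorname{rad}$, and taking expectations over $\mathcal{A}_i$ (and using that the stated inequality is what we want in expectation, then arguing it holds pathwise since the lemma as stated in the excerpt is phrased without an expectation — so one either reads $f(S^{(i)})$ as conditioned suitably or tracks the randomness of $\mathcal{A}_i$ through the rest of the regret proof) yields the claim.

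The main obstacle is reconciling the \emph{in-expectation} nature of the stochastic-greedy guarantee with the \emph{pathwise} statement of Lemma~\ref{lemma1}: the bound $\mathbb{E}[\max_{a\in\mathcal{A}_i}\text{gain}]\ge\frac{1-\epsilon}{k}(\cdots)$ genuinely needs the expectation over $\mathcal{A}_i$, yet the lemma is written as a deterministic inequality. I expect the resolution is that the randomness of $\mathcal{A}_i$ is carried forward and the expectation is taken in the subsequent telescoping step of the regret proof, or equivalently the lemma should be read with an $\mathbb{E}[\cdot\mid S^{(i-1)}]$ on the left; I would flag this and prove the version with the conditional expectation, which is exactly what the regret bound consumes. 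A secondary technical point is making the "miss probability'' bound $(1-s_i/(n-i+1))^{k}\le e^{-\beta k}$ rigorous when $s_i$ is not an integer multiple and when $i$ is close to $k$ (so that $|S^\star\setminus S^{(i-1)}|<k$); monotonicity makes the small-overlap case only easier, and rounding $s_i$ up (as the algorithm does via the implicit ceiling on sample sizes) only helps, so these are routine once set up carefully. The handling of the $\beta\ge1$ regime is immediate: then $\mathcal{A}_i=\Omega\setminus S^{(i-1)}$, $\epsilon$ can be taken as $0$ effectively, and the bound reduces to the standard greedy inequality.
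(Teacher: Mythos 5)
Your proposal follows essentially the same route as the paper: the clean-event sandwich (empirical best versus statistically best arm in $\mathcal{A}_i$) yields the $2\,\mathrm{rad}$ loss, and the sampling argument you sketch (miss-probability of $S^{\star}\setminus S^{(i-1)}$ under $\beta=\log(1/\epsilon)/k$, then averaging marginal gains via submodularity) is precisely Lemma~2 of \citep{mirzasoleiman2015lazier}, which the paper simply cites rather than re-derives. Your caveat is well taken and correct: that lemma bounds the gain only \emph{in expectation} over the draw of $\mathcal{A}_i$, so the stated inequality should be read with a conditional expectation $\mathbb{E}[\,\cdot\mid S^{(i-1)},\mathcal{E}\,]$ (or the randomness of $\mathcal{A}_i$ carried into the subsequent telescoping), whereas the paper invokes it as a pathwise bound without comment; your proposed reading is exactly what the downstream regret bound consumes.
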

\begin{proof}
Recall that $a_{i}$, defined in \eqref{eq:emp_best}, is the index of the arm that with $S^{(i-1)}$ forms the action with the highest empirical mean at the end of phase $i$, and $S^{(i)}=S^{(i-1)} \cup \{a_{i}\}$. Let $a_{i}^*$ denote the index of the arm that with $S^{(i-1)}$ forms the action with the highest expected value. For each $a \in \mathcal{A}_i$, the event that the empirical mean $\bar{f}(S^{(i-1)} \cup \{a\})$ is concentrated within a radius of size $\mathrm{rad}$ around the expected value. We lower bound the expected reward $f(S^{(i)})$ for the empirically best action in phase $i$, $S^{(i)}= \{a_{i}\}\cup S^{(i-1)}$.   To do so, we apply \eqref{emp:conce} to two specific arms, the empirically best $a_{i}$ out of $\mathcal{A}_i$ and the statistically best $a_{i}^\star$ out of $\mathcal{A}_i$. 
\begin{align}
    f(S^{(i)}) &= f(S^{(i-1)} \cup \{a_{i}\}) \tag{by design}\\
    &\geq \bar{f}(S^{(i-1)} \cup \{a_{i}\}) - \mathrm{rad} \tag{ using  \eqref{emp:conce} } \\
    &\geq \bar{f}(S^{(i-1)} \cup \{a_{i}^\star\}) - \mathrm{rad} \tag{$a_{i}$ has the highest empirical mean} \\
    &\geq  f(S^{(i-1)} \cup \{a_{i}^\star\}) - 2\mathrm{rad}. \tag{ using  \eqref{emp:conce} } 
\end{align}
Furthermore, using Lemma 2 in \citep{mirzasoleiman2015lazier}, with $s_i =  (n - i + 1)\min\{1,\frac{\log(\frac{1}{\epsilon})}{k}\}$, we have 
\begin{equation}
    f(S^{(i-1)} \cup \{a_{i}^\star\}) - f(S^{(i-1)}) \geq \frac{1-\epsilon}{k} (f(S^{\star}) - f(S^{(i-1)}).
\end{equation}
Combining the above results, we conclude that
\begin{equation*}
\begin{aligned}
    f(S^{(i)}) - f(S^{(i-1)}) &\geq  \frac{1-\epsilon}{k} (f(S^{\star}) - f(S^{(i-1)}) - 2\mathrm{rad}.
\end{aligned}    
\end{equation*}   
\end{proof}

This lemma identifies a lower bound of the expected marginal gain $f\left(S^{(i)}\right)-f\left(S^{(i-1)}\right)$ of the empirically best action $S^{(i)}$ at the end of phase $i$. As a corollary of Lemma \ref{lemma1}, using properties of submodular set functions, we can lower bound the expected value of SGB's chosen set $S^{(k)}$ of size $k$, which is used for exploitation;

\begin{corollary}
\label{cor:main}
Under the clean event $\mathcal{E}$, for all positive $\epsilon$,  
$$
f\left(S^{(k)}\right) \geq (1 - \frac{1}{e})f(S^{\star}) -(\epsilon  + 2k \mathrm{rad} ) .
$$
\end{corollary}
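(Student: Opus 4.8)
The plan is to derive Corollary~\ref{cor:main} from Lemma~\ref{lemma1} by a standard telescoping/unrolling argument on the per-phase marginal gain, tracking the accumulated $\mathrm{rad}$ error and the $\epsilon$ factor carefully. First I would rewrite the conclusion of Lemma~\ref{lemma1} as a recursion on the ``gap'' quantity $\delta_i := f(S^\star) - f(S^{(i)})$. Subtracting both sides from $f(S^\star)$ and rearranging, the lemma says
\begin{equation*}
\delta_i \leq \left(1 - \frac{1-\epsilon}{k}\right)\delta_{i-1} + 2\,\mathrm{rad},
\end{equation*}
valid for every $i \in \{1,\dots,k\}$ under the clean event $\mathcal{E}$, with $\delta_0 = f(S^\star) - f(\emptyset) \le f(S^\star)$.

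Next I would unroll this linear recursion from $i = k$ down to $i = 0$. The homogeneous part contributes $\left(1 - \frac{1-\epsilon}{k}\right)^k \delta_0$, and the inhomogeneous part contributes $2\,\mathrm{rad}\sum_{j=0}^{k-1}\left(1-\frac{1-\epsilon}{k}\right)^j$. For the homogeneous term I would use the elementary inequality $(1-x)^k \le e^{-xk}$ with $x = (1-\epsilon)/k$, giving $\left(1-\frac{1-\epsilon}{k}\right)^k \le e^{-(1-\epsilon)} = e^{-1}e^{\epsilon} \le e^{-1}(1+\epsilon\cdot\text{something})$; more simply, $e^{-(1-\epsilon)} \le e^{-1} + \epsilon$ since $e^{\epsilon-1} - e^{-1} = e^{-1}(e^\epsilon - 1) \le e^{-1}\cdot e^\epsilon \cdot \epsilon$... — actually the cleanest route is $e^{-(1-\epsilon)} = e^{-1}e^{\epsilon}$ and then bound $e^{\epsilon} \le 1 + \epsilon e^\epsilon$ or, even more directly, note $\delta_0 \le f(S^\star) \le 1$ (rewards are bounded in $[0,1]$) so $e^{-1}e^\epsilon \delta_0 \le e^{-1}\delta_0 + (e^\epsilon - 1)e^{-1} \le e^{-1}f(S^\star) + \epsilon$ using $e^\epsilon - 1 \le \epsilon e^\epsilon$ and keeping $\epsilon$ small, or simply absorbing the correction into the stated additive $\epsilon$. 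For the geometric sum I would bound $\sum_{j=0}^{k-1}\left(1-\frac{1-\epsilon}{k}\right)^j \le \sum_{j=0}^{k-1} 1 = k$, which immediately gives the $2k\,\mathrm{rad}$ term. Assembling: $\delta_k \le e^{-1}f(S^\star) + \epsilon + 2k\,\mathrm{rad}$, i.e.\ $f(S^{(k)}) = f(S^\star) - \delta_k \ge (1-1/e)f(S^\star) - (\epsilon + 2k\,\mathrm{rad})$, which is exactly the claim.

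The main obstacle — though a minor one — is getting the $\epsilon$ bookkeeping clean: the recursion produces a factor $(1-(1-\epsilon)/k)^k = e^{-(1-\epsilon)}(1+o(1))$ rather than $e^{-1}$ exactly, so one must argue that the discrepancy $e^{-(1-\epsilon)} - e^{-1}$ is at most $\epsilon$ (or some constant multiple of $\epsilon$ that can be folded into the constant, since the final theorem only tracks $\epsilon$ up to constants) using $f(S^\star)\le 1$. One should also double-check the direction of the inequality when passing from ``$\ge$'' in the marginal-gain form to ``$\le$'' in the gap form, and confirm $\delta_0 \le f(S^\star)$ follows from monotonicity and nonnegativity of $f$. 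Everything else is a routine geometric-series computation, so I would present the recursion, the unrolling, the two bounds $(1-x)^k\le e^{-xk}$ and $\sum (1-x)^j \le k$, and conclude in a few lines.
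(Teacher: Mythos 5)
Your proposal is correct and follows essentially the same route as the paper: recursively applying Lemma~\ref{lemma1}, bounding the geometric sum by $k$, using $(1-\frac{1-\epsilon}{k})^k \le e^{-(1-\epsilon)}$, and absorbing the $\epsilon$-slack via $f(S^{\star})\le 1$ (your bound $e^{-(1-\epsilon)}\le e^{-1}+\epsilon$ for $\epsilon\le 1$ is the same step as the paper's $e^{\epsilon}\le 1+e\epsilon$, and for $\epsilon>1$ the claim is vacuous since the right-hand side is negative). Working with the gap $\delta_i=f(S^{\star})-f(S^{(i)})$ instead of unrolling $f(S^{(i)})$ directly is only a cosmetic reformulation, so no substantive difference or gap.
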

\begin{proof}
 Applying Lemma \ref{lemma1} result recursively for $i=k$, until we get to $S^{(0)}=\emptyset$; $f(\emptyset)=0$,
\begin{align}
    f(S^{(k)}) 
    & \geq \left[\frac{1-\epsilon}{k}f(S^{\star})-2 \mathrm{rad}\right]\sum_{\ell=0}^{k-1}(1-\frac{1-\epsilon}{k})^\ell.
    \label{eq:prf:mainthm:case1:70}
\end{align}
Simplifying the geometric summation,
\begin{align}
    \sum_{\ell=0}^{k-1}(1-\frac{1}{k})^\ell
    &= \frac{1-(1-\frac{1-\epsilon}{k})^k}{1-(1-\frac{1-\epsilon}{k})} \nonumber\\
    &= k \left(1-\left(1-\frac{1-\epsilon}{k}\right)^k\right). \nonumber  
\end{align}

Continuing with \eqref{eq:prf:mainthm:case1:70}, 
\begin{align}
    f(S^{(k)}) 
    & \geq \left[\frac{1-\epsilon}{k}f(S^{\star})-2 \mathrm{rad}\right]k \left(1-\left(1-\frac{1-\epsilon}{k}\right)^k\right) \nonumber\\
    &\geq \left(1-\left(1-\frac{1-\epsilon}{k}\right)^k\right)f(S^{\star}) - 2k \mathrm{rad}
    \tag{simplifying with $(1-\frac{1}{k})^k\leq 1$}\nonumber\\
    &\geq \left(1-e^{-(1-\epsilon)}\right)f(S^{\star}) - 2k \mathrm{rad}.\nonumber
\end{align}

Therefore, for $0\le \epsilon \leq 1$, using $e^{\epsilon} \leq 1+ e\epsilon$ , we have
\begin{align}
    f(S^{(k)}) &\geq (1 - \frac{1}{e}-\epsilon)f(S^{\star}) - 2k \mathrm{rad} \\
    &= (1 - \frac{1}{e})f(S^{\star}) -\epsilon f(S^{\star}) - 2k \mathrm{rad} 
    \tag{rearranging}\\
    &\geq (1 - \frac{1}{e})f(S^{\star}) -\epsilon  - 2k \mathrm{rad} 
    \tag{$f(S^{\star})\leq 1$}\\
    &= (1 - \frac{1}{e})f(S^{\star}) -(\epsilon  + 2k \mathrm{rad} ). \nonumber
\end{align}
\end{proof}

We use the above Corollary \ref{cor:main} to bound the expected cumulative regret of our proposed algorithm. We split the expected $(1-1/e)$-regret \eqref{eq:reg:exp1e} conditioned on the clean event $\mathcal{E}$ into two parts, one for the exploration and one for the exploitation, 
\begin{align}
    \hspace{-.3cm}\mathbb{E}[\mathcal{R}(T)|\mathcal{E}] &=\sum_{t=1}^T \left((1-\frac{1}{e})f(S^{\star})-\mathbb{E}[f(S_t)]\right) \nonumber\\
      &=\underbrace{\sum_{i=1}^{k} \sum_{t=T_{i-1}+1}^{T_{i}} \left((1-\frac{1}{e})f(S^{\star})-\mathbb{E}[f(S_t)]\right)}_{\text{Exploration}}  \nonumber \\
      &+\underbrace{\sum_{t=T_k+1}^T \left((1-\frac{1}{e})f(S^{\star})-\mathbb{E}[f(S^{(k)})]\right)}_{\text{Exploitation}}. \label{eq:regr:clean:twopart}
\end{align}
Note that during phase $i$, each of the $s_i$ actions in $\mathcal{S}_{i}$ is selected exactly $m$ times, thus $T_{i}-T_{i-1} = ms_i$. For each action $S_t$ choosed during phase $i$, that is for $t\in \{T_{i-1}+1, \cdots, T_{i}\}$, since $S^{(i-1)} \subset S_t$, by monotonicity of the expected reward function $f$ we have $f(S^{(i-1)}) \leq f(S_t)$. Thus we can upper bound the expected regret $\mathbb{E}[\mathcal{R}(T)|\mathcal{E}] $ incurred during the first $k$ phases (first term of \eqref{eq:regr:clean:twopart}) as 
\begin{align}
     &\sum_{i=1}^{k} \sum_{t=T_{i-1}+1}^{T_{i}} \left((1-\frac{1}{e})f(S^{\star})-\mathbb{E}[f(S_t)]\right) \nonumber \\
     &\leq \sum_{i=1}^{k}ms_i\left((1-\frac{1}{e})f(S^{\star})-\mathbb{E}[f(S^{(i-1)})]\right)  \nonumber\\
      &\leq ms_1\sum_{i=1}^{k}\left((1-\frac{1}{e})f(S^{\star})-\mathbb{E}[f(S^{(i-1)})]\right ) \nonumber\\
      &\leq m s_1 k.
     \label{eq:decomp1}
\end{align}
The last inequality follows because the rewards are in $[0,1]$.

We can upper bound the expected regret $\mathbb{E}[\mathcal{R}(T)|\mathcal{E}] $ incurred during the exploitation phase (phase $k+1$;  second term of \eqref{eq:regr:clean:twopart}) by applying Corollary \ref{cor:main} as follows
\begin{align}
    &\sum_{t=T_k+1}^T \left((1-\frac{1}{e})f(S^{\star})-\mathbb{E}[f(S^{(k)})]\right) \nonumber \\ 
    &\leq \sum_{t=T_k+1}^T(\epsilon+2k\mathrm{rad} ) \nonumber\\
    &\leq T\epsilon + 2kT\mathrm{rad}. \label{eq:14}
\end{align}
 Combining the upper bounds \eqref{eq:decomp1} and \eqref{eq:14}, we get 
\begin{align}
    \hspace{-.5cm}\mathbb{E}[\mathcal{R}(T)|\mathcal{E}] &\leq  ms_1k + \epsilon T+2kT\mathrm{rad} . 
\end{align}

We have $s_1 = n\min\{1,\frac{\log(\frac{1}{\epsilon})}{k}\} $ and rad $=\sqrt{\frac{\log (T)}{m}}$. Therefore, we have
\begin{align}
    \hspace{-.1cm}\mathbb{E}[\mathcal{R}(T)|\mathcal{E}] &\leq  mn\min\{k,\log(\frac{1}{\epsilon})\} +  \epsilon T+ 2kT\sqrt{\frac{\log (T)}{m}}.
\end{align}

First, trivially $\min\{k,\log(\frac{1}{\epsilon})\} \leq \log(\frac{1}{\epsilon})$, thus
\begin{align}
    \mathbb{E}[\mathcal{R}(T)|\mathcal{E}] \leq mn\log(\frac{1}{\epsilon}) +  \epsilon T+ 2kT\sqrt{\frac{\log (T)}{m}}. 
\end{align}
Setting the derivative (with respect to $\epsilon$) of the bound  to $0$,
\begin{align}
    0 = -mn\frac{1}{\epsilon}  +T + 0 \qquad \Rightarrow \qquad \epsilon = \frac{mn}{T}.
\end{align} The second derivative (with respect to $\epsilon$), $mn\epsilon^{-2},$  is positive so the stationary point is a minimizer.  

Plugging $\epsilon = \frac{mn}{T}$ into the regret upper bound,
\begin{align}
    \mathbb{E}[\mathcal{R}(T)|\mathcal{E}] %
    &\leq mn\log(\frac{T}{mn}) +  \frac{mn}{T} T+ 2kT\sqrt{\frac{\log (T)}{m}} \nonumber\\
    &\leq mn\log(T) +  mn + 2kT\sqrt{\frac{\log (T)}{m}} \nonumber \\
    &\leq 2mn\log(T)  + 2kT\sqrt{\frac{\log (T)}{m}} .
\end{align}
The above inequality is valid for all $m$ strictly greater than zero. Hence, to find a tighter bound, we find $m^{\star}$ that minimizes the right side. Thus we get
\begin{align}
    m^{\star} %
    &=\left(\frac{k T \sqrt{\log (T)}}{2n \log (T)}\right)^{\frac{2}{3}} =\left(\frac{kT }{2n \sqrt{\log (T)}}\right)^{\frac{2}{3}}. 
\end{align}
For $T \geq n(k+1)\sqrt{\log(T)}$, we have $m^\star \geq \frac{1}{2}$, therefore $\lceil m^\star \rceil \leq 2m^\star$. Plugging $m = \lceil m^\star \rceil$ into the regret bound,
\begin{align}
    \mathbb{E}[\mathcal{R}(T)|\mathcal{E}] %
    &\leq \lceil m^{\star} \rceil n\log(T) + 2kT\log (T)^{1/2} \lceil m^{\star} \rceil^{-1/2} \nonumber\\
    &\leq 2 m^{\star} n\log(T) + 2kT\log (T)^{1/2}  (m^{\star})^{-1/2} \nonumber\\
    &= 2^{\frac{1}{3}} k^{\frac{2}{3}} T^{2 / 3} n^{-2 / 3}  \log(T) ^{-1 / 3} n\log(T) \nonumber \\
    &+ 2^{\frac{4}{3}} k^{\frac{2}{3}}T\log (T)^{1/2} T^{-1 / 3} n^{1 / 3}  \log(T) ^{1 / 6} \nonumber\\
    &= 2^{\frac{1}{3}} k^{\frac{2}{3}} T^{2 / 3} n^{1 / 3}  \log(T) ^{2 / 3}  \nonumber \\
    &+ 2^{\frac{4}{3}} k^{\frac{2}{3}} T^{2 / 3}  n^{1 / 3}  \log(T) ^{2 / 3} \nonumber\\
    &\leq \mathcal{O}(n^{\frac{1}{3}} k^{\frac{2}{3}} T^{\frac{2}{3}} \log(T)^{\frac{2}{3}}  ).
\end{align}
Based on $m^{\star}$, we define the optimal $\epsilon^{\star}$ as follows
\begin{align}
    \epsilon^{\star} = \frac{m^{\star}n}{T} &= n T^{-1} \left(\frac{kT }{2n \sqrt{\log (T)}}\right)^{\frac{2}{3}}  = \left(\frac{nk^2 }{4 T \log (T)}\right)^{\frac{1}{3}}.\nonumber
\end{align}
Under the bad event, i.e., the complement $\bar{\mathcal{E}}$ of the good event $\mathcal{E}$, given that the rewards are bounded in $[0,1]$, it can be easily seen that $\mathbb{E}[\mathcal{R}(T) \mid \mathcal{\bar{E}}]  \leq T$. Moreover, by using the Hoeffding inequality \citep{hoeffding1994probability}, for $T \geq nk$, we have $\mathbb{P}(\mathcal{\bar{E}})  \leq \frac{2}{T}$, see Appendix. Therefore, we obtain $\mathbb{E}[\mathcal{R}(T)] \leq \mathcal{O}(n^{\frac{1}{3}} k^{\frac{2}{3}} T^{\frac{2}{3}} \log(T)^{\frac{2}{3}}).$

\begin{remark}
    The framework of \cite{nie2023framework} that adapts offline algorithms for combinatorial optimization problems with robustness guarantees to online settings via the explore-then-commit approach can be applied to the offline algorithm in \citep{mirzasoleiman2015lazier}. However, as this offline algorithm has $(1-1/e-\epsilon)$-approximation guarantee, such an approach will give a weaker $(1-1/e-\epsilon)$-regret guarantee rather than $(1-1/e)$-regret guarantee studied in this paper.
\end{remark}

\begin{remark}\label{rem:unknown_horizon} For unknown time horizon $T$, the geometric doubling trick can extend our result to an anytime algorithm. To initialize the algorithm, we choose $T_0$ to be large enough, then we choose a geometric succession $T_i=T_0 2^i$ for $i\in \{1,2,\cdots\}$, and run our algorithm during the time interval $T_{i+1}-T_i$ with a complete restart. 
From Theorem 4 in \citep{Besson2018WhatDT}, we can prove that the regret bound preserves the $T^{2/3}$ dependency with changes only in the constant factor. 
\end{remark}

\begin{remark}
For the scenario we study in this paper of combinatorial multi-armed bandit with submodular rewards in expectation and under full-bandit feedback, it is still unknown if $\tilde{\mathcal{O}}(T^{1/2})$ expected cumulative $(1-1/e)$-regret is possible (ignoring $n$ and $k$ dependence), and only $\tilde{\mathcal{O}}(T^{2/3})$ bounds have been shown in the literature; see Table \ref{tab:related-work}. 
\end{remark}

\section{Experiments on Online Social Influence Maximization}
\label{sec:experiments}

\begin{figure*}[t]%
    \centering
    \subfloat[]{\label{fig:im:a}{\includegraphics[width=0.26\linewidth]{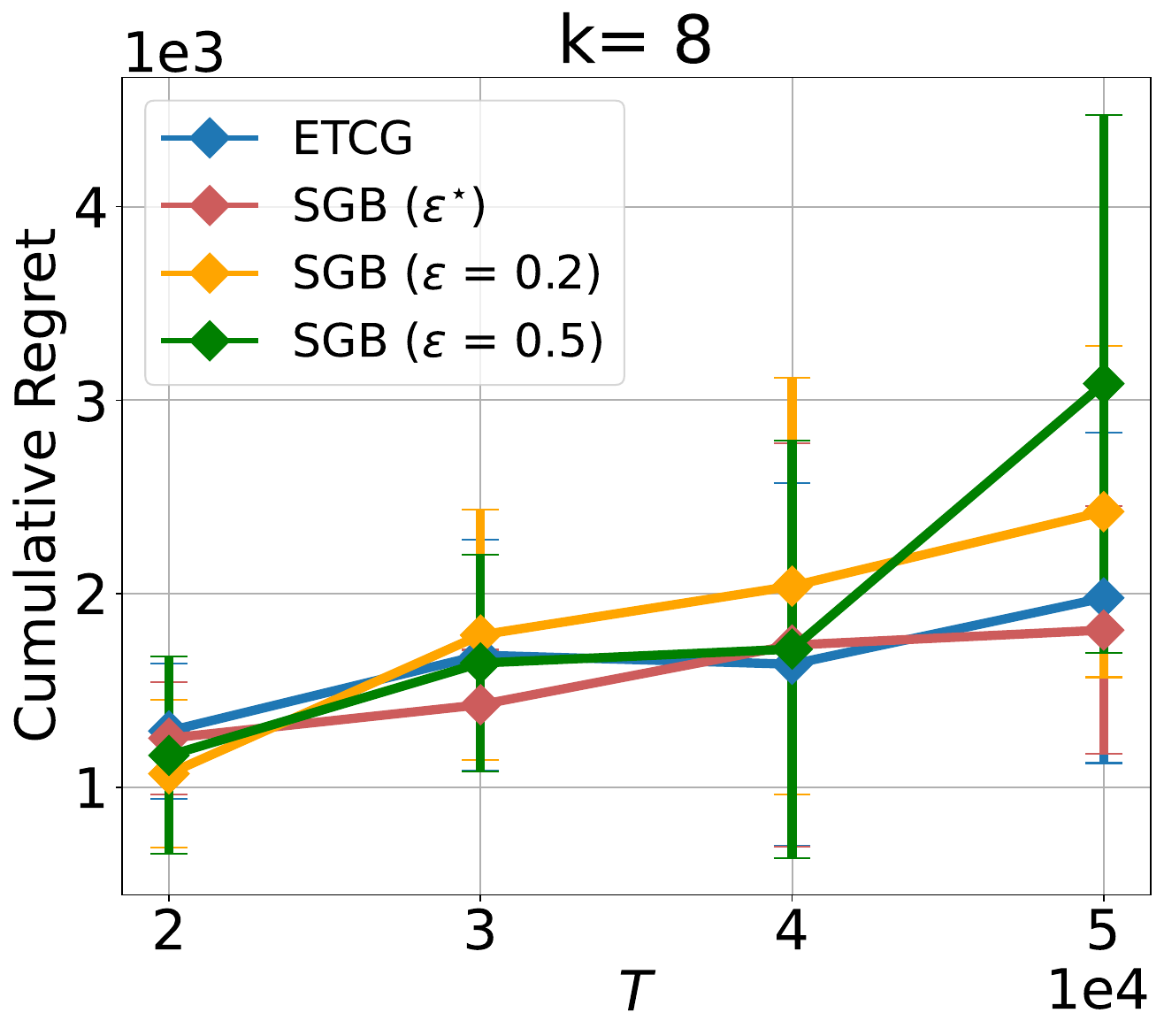} }} \hspace{1cm}%
    \subfloat[]{\label{fig:im:b}{\includegraphics[width=0.27\linewidth]{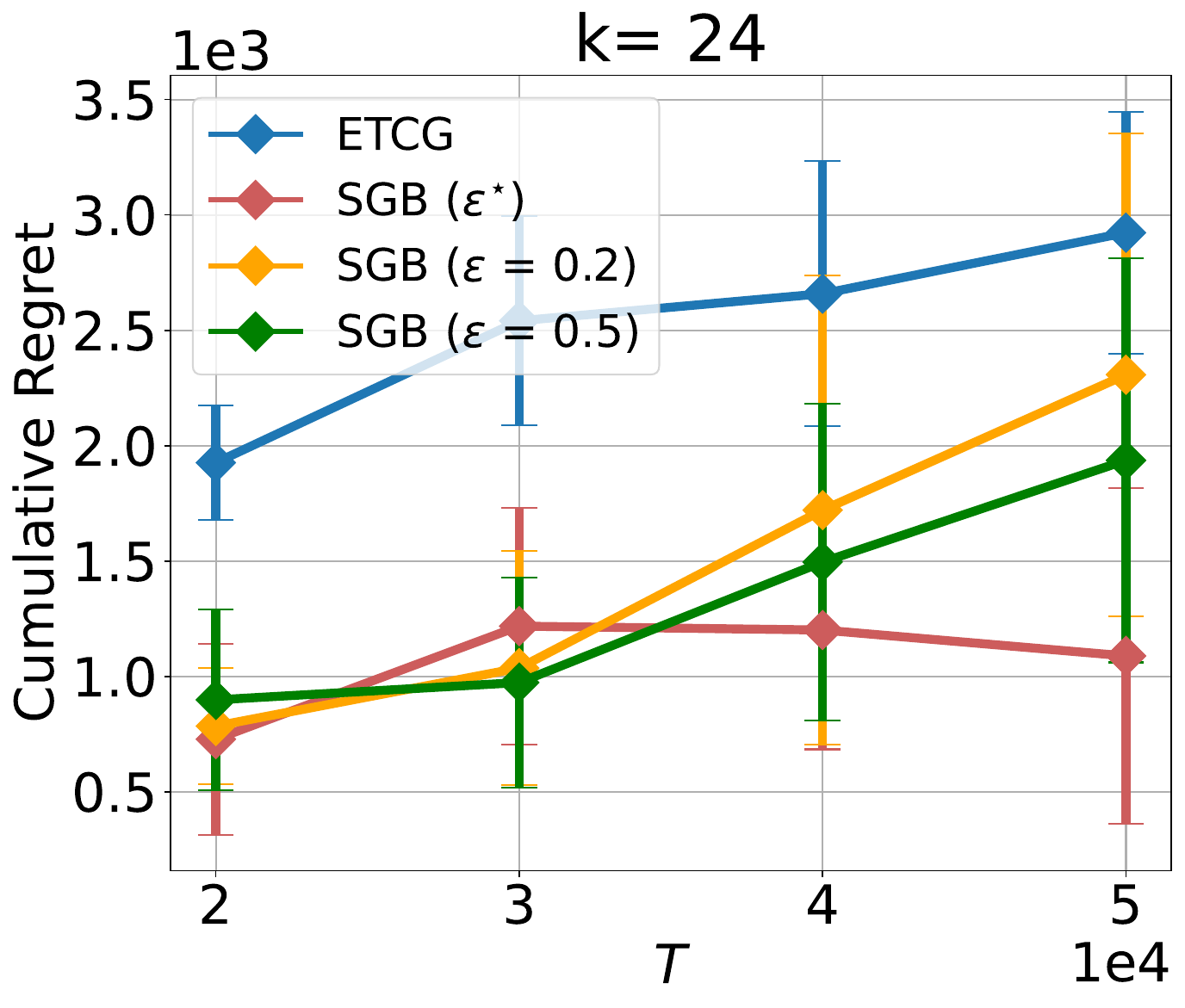} }} \hspace{1cm}%
    \subfloat[]{\label{fig:im:c}{\includegraphics[width=0.26\linewidth]{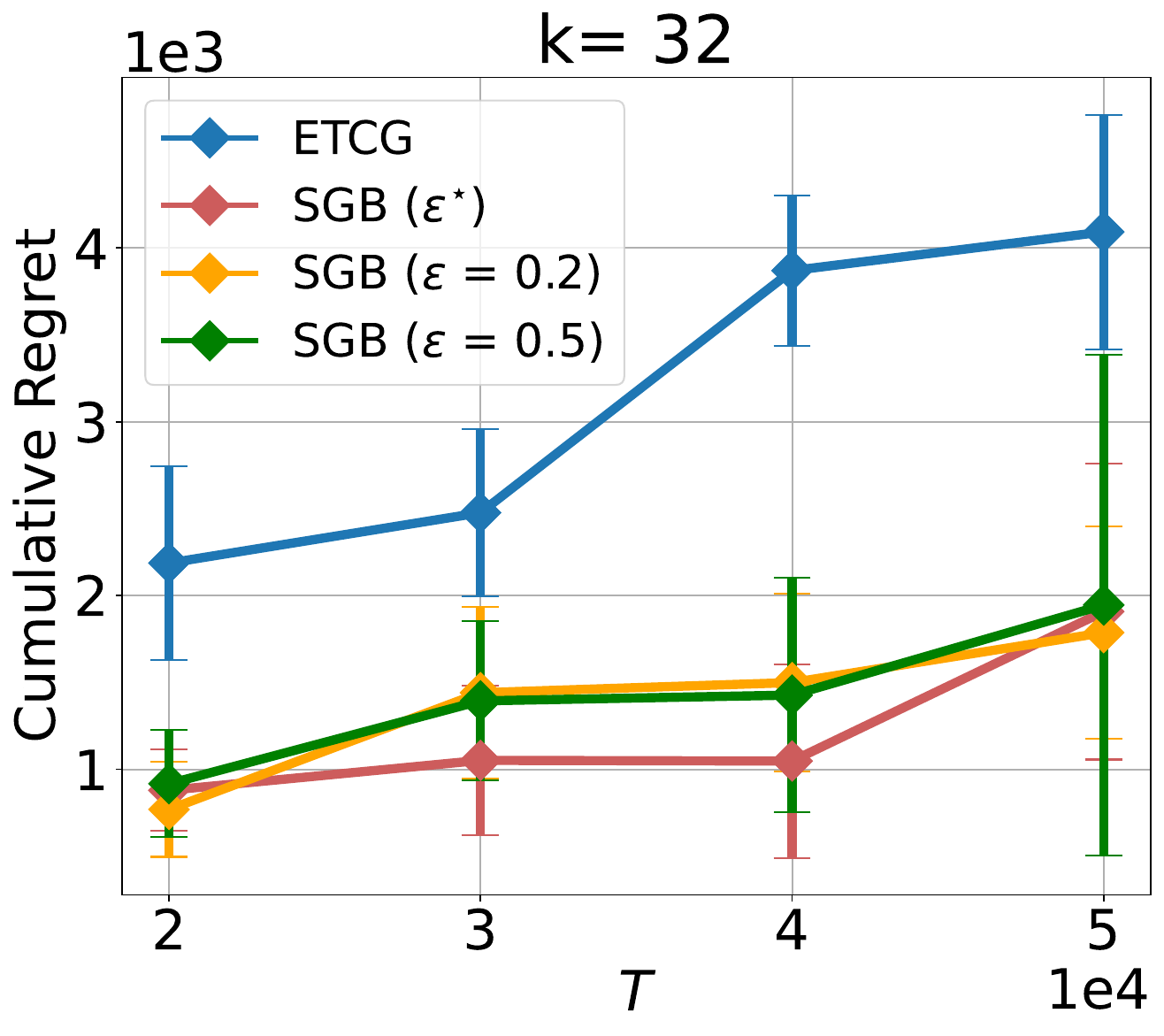} }}%
    \\
    \subfloat[]{\label{fig:im:d}{\includegraphics[width=0.26\linewidth]{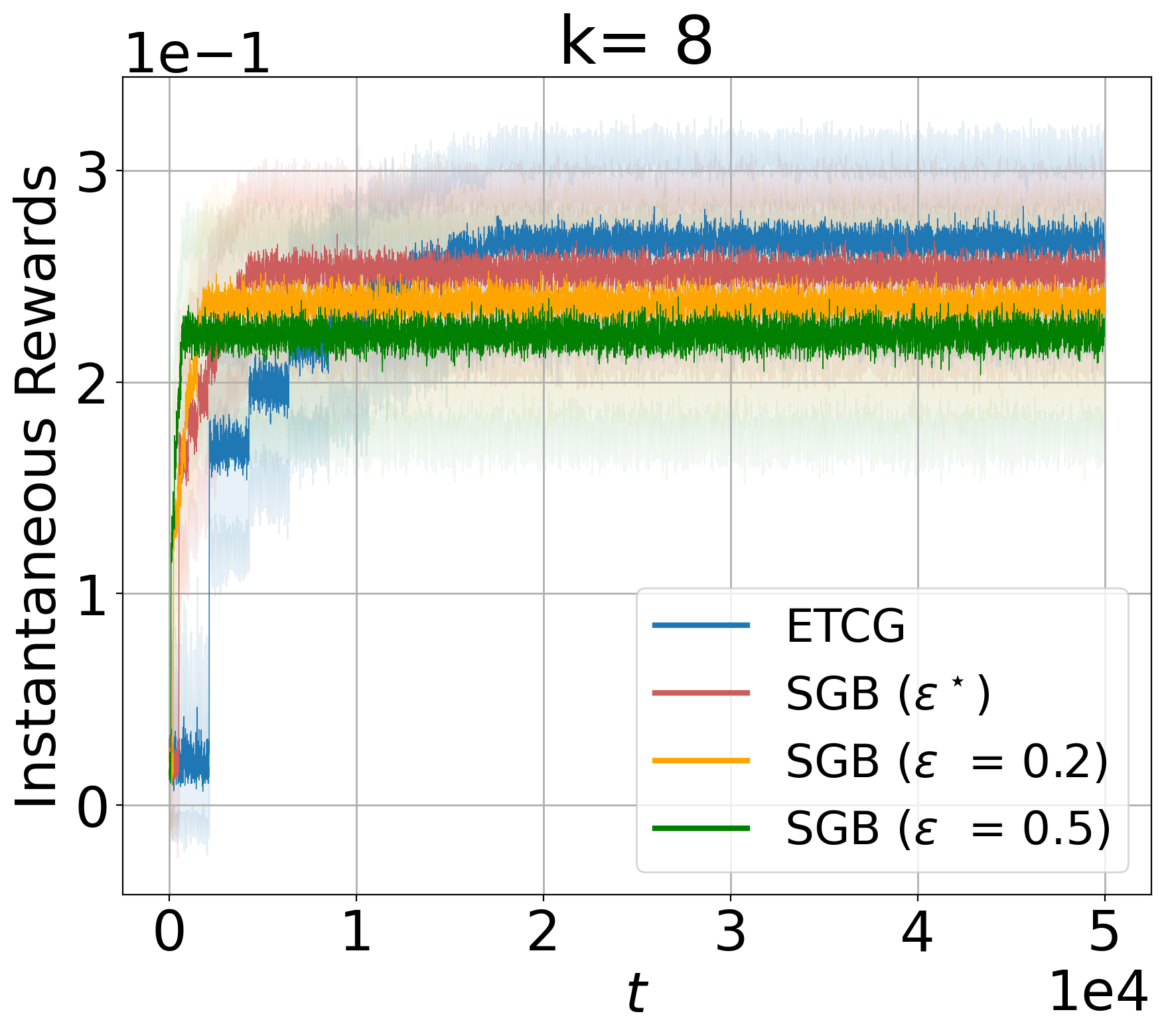} }} \hspace{1cm}%
    \subfloat[]{\label{fig:im:e}{\includegraphics[width=0.26\linewidth]{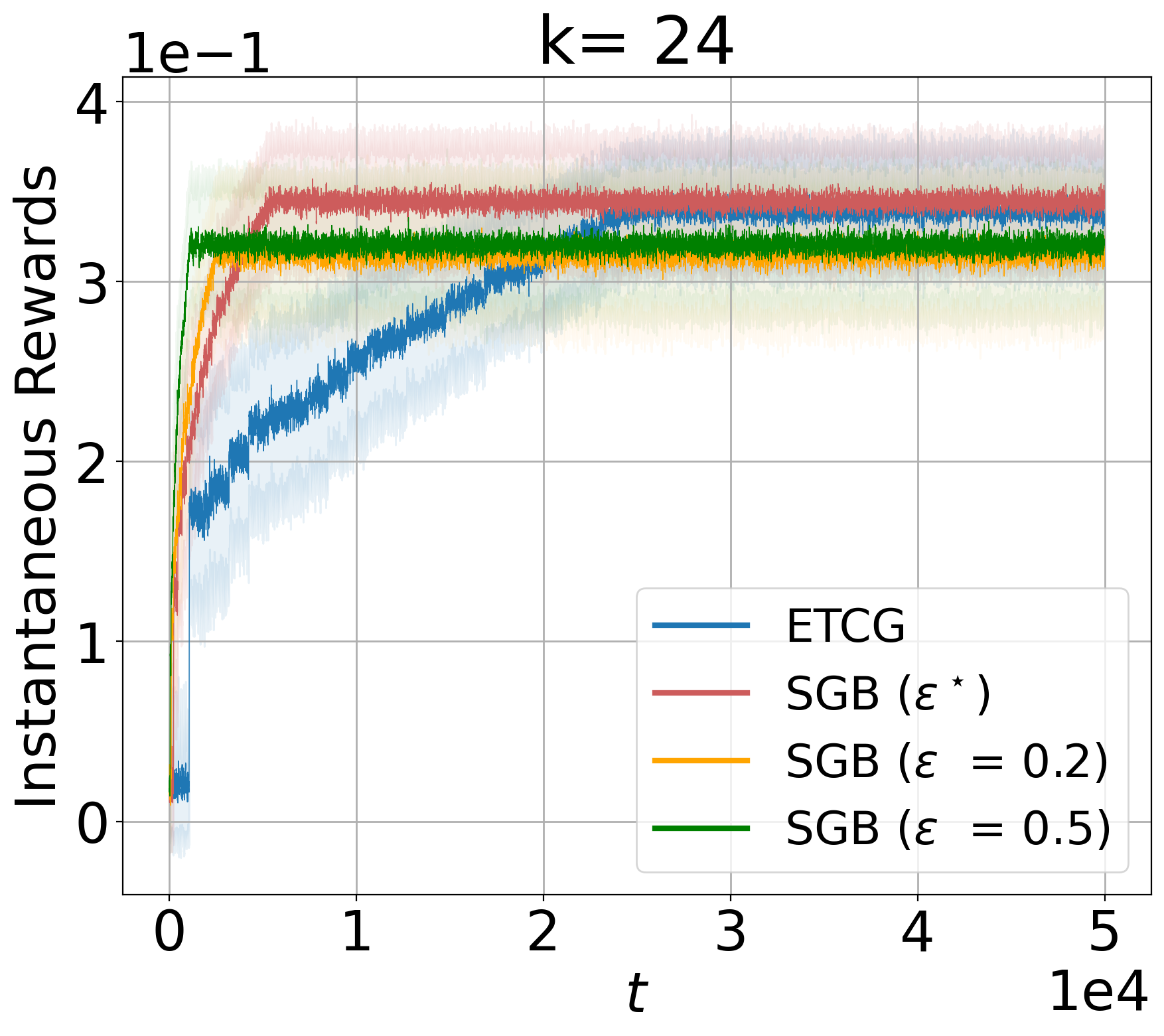} }}\hspace{1cm}%
    \subfloat[]{\label{fig:im:f}{\includegraphics[width=0.26\linewidth]{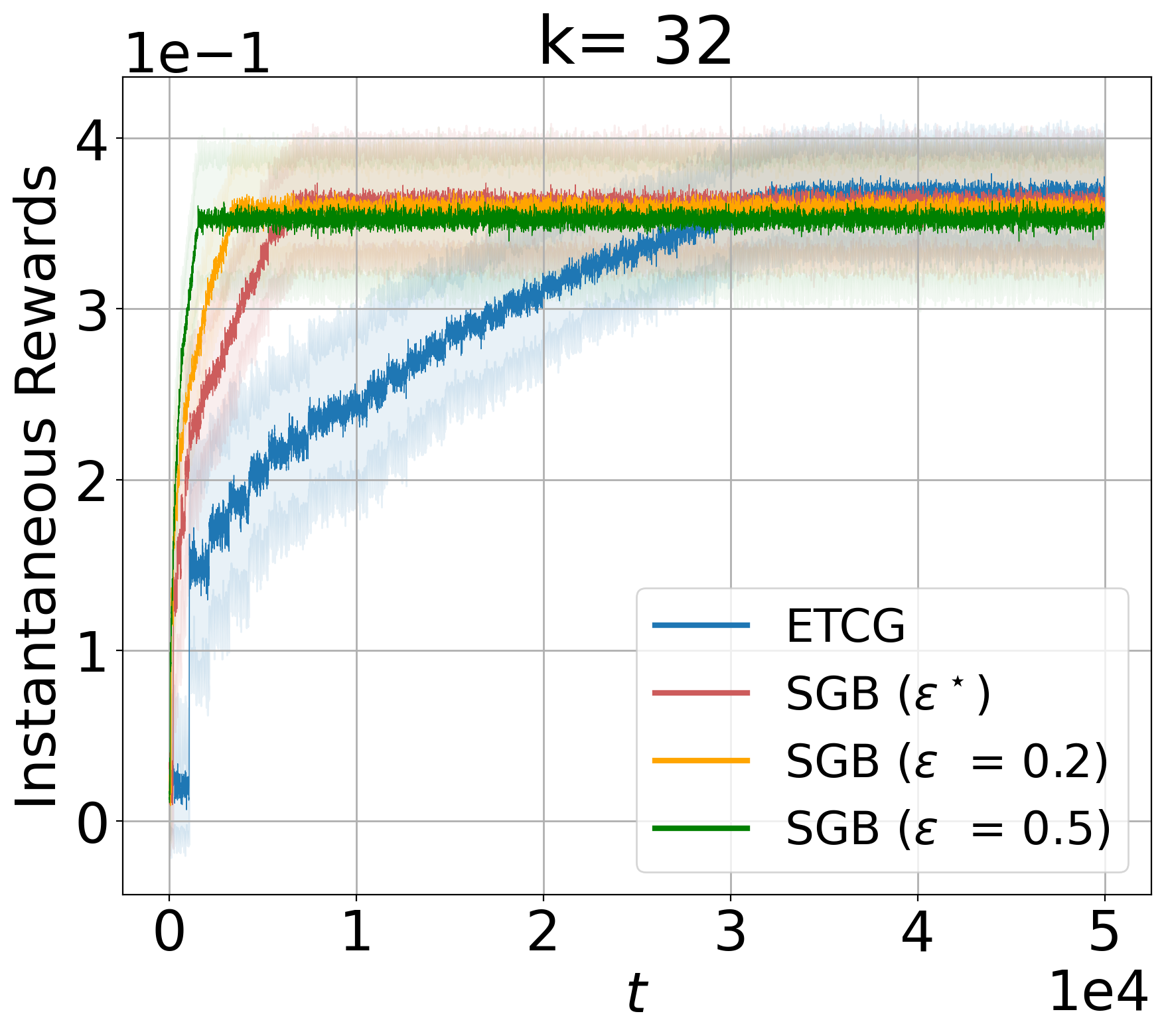} }}%
    \caption{Comparison of SGB, for different $\epsilon$ values, including $\epsilon^{\star} = (\frac{n k^2 }{4 T \log (T)})^{\frac{1}{3}}$, and ETCG. (a), (b), and (c) are the cumulative regret results as a function of horizon $T$. (d), (e), and (f) show the moving average plots of the immediate rewards as a function of t, with a window size of 100, with T fixed at $5\times10^4$, for which the respective $\epsilon^{\star}$ values are around 0.251, 0.522, and 0.632.} %
    \label{fig:im}%

\end{figure*}

\subsection{Problem Statement}

\textit{Social influence maximization} is a combinatorial problem, which consists of selecting a subset of nodes in a graph that can influence the remaining nodes. For instance, when marketing a newly developed product, one strategy is to identify a group of highly influential individuals and rely on their recommendations to reach a broader audience. Influence maximization can be formulated as a monotone submodular maximization problem, where adding more nodes to the selected set yields diminishing returns without negatively affecting other nodes. Typically, there is a fixed constraint on the cardinality of the selected set. While some works have addressed influence maximization as a multi-armed bandit problem with additional feedback \citep{lei2015online, wen2017online, vaswani2017model, li2020online, perrault2020budgeted}, this feedback is often unavailable in most social networks, except for a few public accounts. Recently, \cite{nie2022explore} proposed the ETCG algorithm for influence maximization under full-bandit feedback. Their algorithm demonstrated superior performance through empirical evaluations compared to other full-bandit algorithms. In this work, we compare our SGB method, for different $\epsilon$ values, including the optimized value $\epsilon^{\star} = (\frac{n k^2 }{4 T \log (T)})^{\frac{1}{3}}$, with ETCG \citep{nie2022explore, nie2023framework}. 

\subsection{Experiment Details}
For the experiments, instead of $(1-1/e)$ regret in Eq. (\ref{eq:reg:1e}), which requires knowing $S^{\star}$, we compare the cumulative rewards achieved by SGB for different $\epsilon$, including $\epsilon^\star$, and ETCG against $Tf(S^\mathrm{grd})$, where $S^\mathrm{grd}$ is the solution returned by the offline $(1-1/e)$-approximation algorithm suggested by \cite{nemhauser1978analysis}. Since $f(S^\mathrm{grd})\geq (1-1/e)f(S^{\star})$, thus $Tf(S^\mathrm{grd})$ is a more challenging reference value than $(1-1/e)Tf(S^{\star})$. 

We experimented using a portion of the Facebook network \citep{NIPS2012_7a614fd0}. We used the community detection method proposed by \citep{Blondel2008FastUO} to detect a community with 534 nodes and 8158 edges, enabling multiple experiments for various horizons. The diffusion process is simulated using the independent cascade model \citep{kempe2003maximizing}, wherein in each discrete step, an active node (that was inactive at the previous time step) independently tries to infect each of its inactive neighbors.  We used 0.1 uniform infection probabilities for each edge. For every time horizon $T\in\{2 \times 10^4, 3\times10^4, 4\times10^4, 5\times10^4\}$, we tested each method ten times.

\subsection{Experimental Results}

Figures (\ref{fig:im:a}), (\ref{fig:im:b}), and (\ref{fig:im:c}) show average cumulative regret curves for SGB with different values of the parameter $\epsilon$, including the optimal value $\epsilon^{\star} = (\frac{n k^2}{4 T \log(T)})^{\frac{1}{3}}$, for various time horizon values $T$, with a cardinality constraint $k$ set to 8, 24, and 32, respectively. The shaded areas depict the standard deviation. The figure axes are linearly scaled, so a linear cumulative regret curve corresponds to a linear $\widetilde{O}(T)$ cumulative regret. When $k=8$, SGB with $\epsilon^{\star}$ demonstrates nearly the lowest average cumulative regret across different time horizons $T$. However, with non-optimal values of $\epsilon$ (0.2 and 0.5), the cumulative regret of SGB is higher than that of ETCG. For higher values of $k$, such as 24 and 32, with $\epsilon^{\star}$ as shown in Figures (\ref{fig:im:b}) and (\ref{fig:im:c}), respectively, SGB with all the considered $\epsilon$ values outperforms ETCG with lower average cumulative regrets. 
Furthermore, Figures (\ref{fig:im:d}), (\ref{fig:im:e}), and (\ref{fig:im:f}) illustrate immediate rewards over a horizon $T=5 \times 10^4$ for cardinality constraints $k$ of 8, 24, and 32, and $\epsilon^{\star}$ values around 0.251, 0.522, and 0.632, respectively. The curves for all methods are smoothed using a moving average with a window size of 100. For $k = 8$, although ETCG finds a slightly better solution, SGB with all $\epsilon$ ends exploration much faster. For $k = 24$, as shown in Fig. (\ref{fig:im:e}), SGB using $\epsilon^{\star}$ ends exploration much faster than ETCG and achieves a better solution. Using other $\epsilon$ values ends exploration slightly faster than the optimal value but to a lower solution. Similarly, for $k = 32$, as shown in Fig. (\ref{fig:im:f}), SGB with different $\epsilon$ values ends exploration 30 times faster than ETCG to a solution within a $0.01$ neighborhood of 0.37. Furthermore, using $\epsilon^{\star}$ yields the best result compared to other values. 
Therefore, as predicted by the theory, SGB using $\epsilon^{\star}$ has lower expected cumulative regret than ETCG. Additionally, as observed in the experiments and predicted by the theory, our method becomes more effective for larger values of $k$.

\section{Conclusion}

This paper introduces SGB, a novel method in the online greedy strategy, which incorporates subset random sampling from the remaining arms in each greedy iteration. Theoretical analysis establishes that SGB achieves an expected cumulative $(1-1/e)$-regret of at most $\tilde{\mathcal{O}}(n^{\frac{1}{3}} k^{\frac{2}{3}} T^{\frac{2}{3}})$ for monotone stochastic submodular rewards, outperforming the previous state-of-the-art method by a factor of $k^{1/3}$ \citep{nie2023framework}. Empirical experiments on online influence maximization shows SGB's superior performance, highlighting its effectiveness and potential for real-world applications.

\bibliography{aaai24}

\begin{thebibliography}{36}
\providecommand{\natexlab}[1]{#1}

\bibitem[{Agarwal et~al.(2021)Agarwal, Aggarwal, Umrawal, and Quinn}]{agarwal2021dart}
Agarwal, M.; Aggarwal, V.; Umrawal, A.~K.; and Quinn, C. 2021.
\newblock DART: Adaptive Accept Reject Algorithm for Non-Linear Combinatorial Bandits.
\newblock \emph{Proceedings of the AAAI Conference on Artificial Intelligence}, 35(8): 6557--6565.

\bibitem[{Agarwal et~al.(2022)Agarwal, Aggarwal, Umrawal, and Quinn}]{agarwal2021stochastic}
Agarwal, M.; Aggarwal, V.; Umrawal, A.~K.; and Quinn, C.~J. 2022.
\newblock Stochastic Top K-Subset Bandits with Linear Space and Non-Linear Feedback with Applications to Social Influence Maximization.
\newblock \emph{ACM/IMS Transactions on Data Science (TDS)}, 2(4): 1--39.

\bibitem[{Auer, Cesa-Bianchi, and Fischer(2002)}]{auer2002finite}
Auer, P.; Cesa-Bianchi, N.; and Fischer, P. 2002.
\newblock Finite-time analysis of the multiarmed bandit problem.
\newblock \emph{Machine learning}, 47(2): 235--256.

\bibitem[{Auer et~al.(2002)Auer, Cesa-Bianchi, Freund, and Schapire}]{auer2002nonstochastic}
Auer, P.; Cesa-Bianchi, N.; Freund, Y.; and Schapire, R.~E. 2002.
\newblock The nonstochastic multiarmed bandit problem.
\newblock \emph{SIAM journal on computing}, 32(1): 48--77.

\bibitem[{Balakrishnan et~al.(2022)Balakrishnan, Li, Zhou, Himayat, Smith, and Bilmes}]{balakrishnan2022diverse}
Balakrishnan, R.; Li, T.; Zhou, T.; Himayat, N.; Smith, V.; and Bilmes, J. 2022.
\newblock Diverse client selection for federated learning via submodular maximization.
\newblock In \emph{International Conference on Learning Representations}.

\bibitem[{Besson and Kaufmann(2018)}]{Besson2018WhatDT}
Besson, L.; and Kaufmann, E. 2018.
\newblock What Doubling Tricks Can and Can't Do for Multi-Armed Bandits.
\newblock \emph{ArXiv}, abs/1803.06971.

\bibitem[{Blondel et~al.(2008)Blondel, Guillaume, Lambiotte, and Lefebvre}]{Blondel2008FastUO}
Blondel, V.~D.; Guillaume, J.-L.; Lambiotte, R.; and Lefebvre, E. 2008.
\newblock Fast unfolding of communities in large networks.
\newblock \emph{Journal of Statistical Mechanics: Theory and Experiment}, 2008: 10008.

\bibitem[{Dani, Hayes, and Kakade(2008)}]{dani2008stochastic}
Dani, V.; Hayes, T.~P.; and Kakade, S.~M. 2008.
\newblock Stochastic linear optimization under bandit feedback.
\newblock In \emph{21st Annual Conference on Learning Theory}, 355--366.

\bibitem[{Edmonds(2003)}]{edmonds2003submodular}
Edmonds, J. 2003.
\newblock Submodular functions, matroids, and certain polyhedra.
\newblock In \emph{Combinatorial Optimization—Eureka, You Shrink!}, 11--26. Springer.

\bibitem[{Feige(1998)}]{feige1998threshold}
Feige, U. 1998.
\newblock A threshold of ln n for approximating set cover.
\newblock \emph{Journal of the ACM (JACM)}, 45(4): 634--652.

\bibitem[{Feige, Mirrokni, and Vondr{\'a}k(2011)}]{feige2011maximizing}
Feige, U.; Mirrokni, V.~S.; and Vondr{\'a}k, J. 2011.
\newblock Maximizing non-monotone submodular functions.
\newblock \emph{SIAM Journal on Computing}, 40(4): 1133--1153.

\bibitem[{Fourati et~al.(2023{\natexlab{a}})Fourati, Aggarwal, Quinn, and Alouini}]{fourati2023randomized}
Fourati, F.; Aggarwal, V.; Quinn, C.; and Alouini, M.-S. 2023{\natexlab{a}}.
\newblock Randomized greedy learning for non-monotone stochastic submodular maximization under full-bandit feedback.
\newblock In \emph{International Conference on Artificial Intelligence and Statistics}, 7455--7471. PMLR.

\bibitem[{Fourati et~al.(2023{\natexlab{b}})Fourati, Kharrat, Aggarwal, Alouini, and Canini}]{fourati2023filfl}
Fourati, F.; Kharrat, S.; Aggarwal, V.; Alouini, M.-S.; and Canini, M. 2023{\natexlab{b}}.
\newblock {FilFL}: Accelerating Federated Learning via Client Filtering.
\newblock \emph{arXiv preprint arXiv:2302.06599}.

\bibitem[{Goemans and Williamson(1995)}]{goemans1995improved}
Goemans, M.~X.; and Williamson, D.~P. 1995.
\newblock Improved approximation algorithms for maximum cut and satisfiability problems using semidefinite programming.
\newblock \emph{Journal of the ACM (JACM)}, 42(6): 1115--1145.

\bibitem[{Golovin, Krause, and Streeter(2014)}]{golovin2014online}
Golovin, D.; Krause, A.; and Streeter, M. 2014.
\newblock Online submodular maximization under a matroid constraint with application to learning assignments.
\newblock \emph{arXiv preprint arXiv:1407.1082}.

\bibitem[{Hoeffding(1994)}]{hoeffding1994probability}
Hoeffding, W. 1994.
\newblock Probability inequalities for sums of bounded random variables.
\newblock In \emph{The collected works of Wassily Hoeffding}, 409--426. Springer.

\bibitem[{Iwata, Fleischer, and Fujishige(2001)}]{iwata2001combinatorial}
Iwata, S.; Fleischer, L.; and Fujishige, S. 2001.
\newblock A combinatorial strongly polynomial algorithm for minimizing submodular functions.
\newblock \emph{Journal of the ACM (JACM)}, 48(4): 761--777.

\bibitem[{Kempe, Kleinberg, and Tardos(2003)}]{kempe2003maximizing}
Kempe, D.; Kleinberg, J.; and Tardos, {\'E}. 2003.
\newblock Maximizing the spread of influence through a social network.
\newblock In \emph{Proceedings of the ninth ACM SIGKDD international conference on Knowledge discovery and data mining}, 137--146.

\bibitem[{Lattimore and Szepesv{\'a}ri(2020)}]{lattimore2020bandit}
Lattimore, T.; and Szepesv{\'a}ri, C. 2020.
\newblock \emph{Bandit algorithms}.
\newblock Cambridge University Press.

\bibitem[{Lei et~al.(2015)Lei, Maniu, Mo, Cheng, and Senellart}]{lei2015online}
Lei, S.; Maniu, S.; Mo, L.; Cheng, R.; and Senellart, P. 2015.
\newblock Online influence maximization.
\newblock In \emph{Proceedings of the 21th ACM SIGKDD International Conference on Knowledge Discovery and Data Mining}, 645--654.

\bibitem[{Leskovec and Mcauley(2012)}]{NIPS2012_7a614fd0}
Leskovec, J.; and Mcauley, J. 2012.
\newblock Learning to Discover Social Circles in Ego Networks.
\newblock In \emph{Advances in Neural Information Processing Systems}, volume~25. Curran Associates, Inc.

\bibitem[{Li et~al.(2020)Li, Kong, Tang, Li, and Chen}]{li2020online}
Li, S.; Kong, F.; Tang, K.; Li, Q.; and Chen, W. 2020.
\newblock Online Influence Maximization under Linear Threshold Model.
\newblock \emph{arXiv preprint arXiv:2011.06378}.

\bibitem[{Mirzasoleiman et~al.(2015)Mirzasoleiman, Badanidiyuru, Karbasi, Vondr{\'a}k, and Krause}]{mirzasoleiman2015lazier}
Mirzasoleiman, B.; Badanidiyuru, A.; Karbasi, A.; Vondr{\'a}k, J.; and Krause, A. 2015.
\newblock Lazier than lazy greedy.
\newblock In \emph{Proceedings of the AAAI Conference on Artificial Intelligence}, volume~29.

\bibitem[{Nemhauser, Wolsey, and Fisher(1978)}]{nemhauser1978analysis}
Nemhauser, G.~L.; Wolsey, L.~A.; and Fisher, M.~L. 1978.
\newblock An analysis of approximations for maximizing submodular set functions—I.
\newblock \emph{Mathematical programming}, 14(1): 265--294.

\bibitem[{Niazadeh et~al.(2020)Niazadeh, Golrezaei, Wang, Susan, and Badanidiyuru}]{49310}
Niazadeh, R.; Golrezaei, N.; Wang, J.; Susan, F.; and Badanidiyuru, A. 2020.
\newblock Online Learning via Offline Greedy: Applications in Market Design and Optimization.
\newblock \emph{EC 2021, Management Science Journal}.

\bibitem[{Niazadeh et~al.(2021)Niazadeh, Golrezaei, Wang, Susan, and Badanidiyuru}]{niazadeh2021online}
Niazadeh, R.; Golrezaei, N.; Wang, J.~R.; Susan, F.; and Badanidiyuru, A. 2021.
\newblock Online learning via offline greedy algorithms: Applications in market design and optimization.
\newblock In \emph{Proceedings of the 22nd ACM Conference on Economics and Computation}, 737--738.

\bibitem[{Nie et~al.(2022)Nie, Agarwal, Umrawal, Aggarwal, and Quinn}]{nie2022explore}
Nie, G.; Agarwal, M.; Umrawal, A.~K.; Aggarwal, V.; and Quinn, C.~J. 2022.
\newblock An Explore-then-Commit Algorithm for Submodular Maximization Under Full-bandit Feedback.
\newblock In \emph{The 38th Conference on Uncertainty in Artificial Intelligence}.

\bibitem[{Nie et~al.(2023)Nie, Nadew, Zhu, Aggarwal, and Quinn}]{nie2023framework}
Nie, G.; Nadew, Y.~Y.; Zhu, Y.; Aggarwal, V.; and Quinn, C.~J. 2023.
\newblock A Framework for Adapting Offline Algorithms to Solve Combinatorial Multi-Armed Bandit Problems with Bandit Feedback.
\newblock In \emph{Proceedings of the 40th International Conference on Machine Learning}, ICML'23. JMLR.

\bibitem[{Perrault et~al.(2020)Perrault, Healey, Wen, and Valko}]{perrault2020budgeted}
Perrault, P.; Healey, J.; Wen, Z.; and Valko, M. 2020.
\newblock Budgeted online influence maximization.
\newblock In \emph{International Conference on Machine Learning}, 7620--7631. PMLR.

\bibitem[{Qin and Zhu(2013)}]{Qin2013PromotingDI}
Qin, L.; and Zhu, X. 2013.
\newblock Promoting Diversity in Recommendation by Entropy Regularizer.
\newblock In \emph{IJCAI}.

\bibitem[{Rejwan and Mansour(2020)}]{rejwan2020top}
Rejwan, I.; and Mansour, Y. 2020.
\newblock Top-$ k $ Combinatorial Bandits with Full-Bandit Feedback.
\newblock In \emph{Algorithmic Learning Theory}, 752--776.

\bibitem[{Roughgarden and Wang(2018)}]{pmlr-v75-roughgarden18a}
Roughgarden, T.; and Wang, J.~R. 2018.
\newblock An Optimal Learning Algorithm for Online Unconstrained Submodular Maximization.
\newblock In \emph{Proceedings of the 31st Conference On Learning Theory}, 1307--1325.

\bibitem[{Streeter and Golovin(2008)}]{streeter2008online}
Streeter, M.; and Golovin, D. 2008.
\newblock An Online Algorithm for Maximizing Submodular Functions.
\newblock In \emph{Proceedings of the 21st International Conference on Neural Information Processing Systems}, NIPS'08, 1577–1584. Red Hook, NY, USA: Curran Associates Inc.

\bibitem[{Takemori et~al.(2020)Takemori, Sato, Sonoda, Singh, and Ohkuma}]{takemori2020submodular}
Takemori, S.; Sato, M.; Sonoda, T.; Singh, J.; and Ohkuma, T. 2020.
\newblock Submodular Bandit Problem Under Multiple Constraints.
\newblock In \emph{Conference on Uncertainty in Artificial Intelligence}, 191--200. PMLR.

\bibitem[{Vaswani et~al.(2017)Vaswani, Kveton, Wen, Ghavamzadeh, Lakshmanan, and Schmidt}]{vaswani2017model}
Vaswani, S.; Kveton, B.; Wen, Z.; Ghavamzadeh, M.; Lakshmanan, L.~V.; and Schmidt, M. 2017.
\newblock Model-independent online learning for influence maximization.
\newblock In \emph{International Conference on Machine Learning}, 3530--3539. PMLR.

\bibitem[{Wen et~al.(2017)Wen, Kveton, Valko, and Vaswani}]{wen2017online}
Wen, Z.; Kveton, B.; Valko, M.; and Vaswani, S. 2017.
\newblock Online influence maximization under independent cascade model with semi-bandit feedback.
\newblock In \emph{Proceedings of the 31st International Conference on Neural Information Processing Systems}, 3026--3036.

\end{thebibliography}


\onecolumn
\appendix

\section{Lemmas and Proofs} \label{prf:main}

\begin{lemma} [Hoeffding's inequality] \label{lem:hoeffding}
Let $X_1, \cdots, X_n$ be independent random variables bounded in the interval $[0, 1]$, and let $\bar{X}$ denote their empirical mean. Then we have for any $\gamma >0$,
\begin{align}
    \mathbb{P}\left( \big|\bar{X} -  \mathbb{E}[\bar{X}] \big| \geq \gamma  \right) \leq 2 \mathrm{exp} \left( - 2 n \gamma^2  \right).
\end{align}
\end{lemma}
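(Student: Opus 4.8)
The plan is to prove this via the standard Cram\'er--Chernoff (exponential moment) method together with Hoeffding's lemma on the moment generating function of a bounded random variable. Write $Y_i = X_i - \mathbb{E}[X_i]$, so the $Y_i$ are independent, zero-mean, and each $Y_i$ takes values in an interval of width $1$. First I would handle the upper tail: for any $\lambda > 0$, Markov's inequality applied to $\exp(\lambda \sum_{i=1}^n Y_i)$ gives
\[
\mathbb{P}\Big(\bar X - \mathbb{E}[\bar X] \ge \gamma\Big) = \mathbb{P}\Big(\sum_{i=1}^n Y_i \ge n\gamma\Big) \le e^{-\lambda n \gamma}\, \mathbb{E}\Big[e^{\lambda \sum_i Y_i}\Big] = e^{-\lambda n \gamma} \prod_{i=1}^n \mathbb{E}\big[e^{\lambda Y_i}\big],
\]
where the last equality uses independence.

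The key step is to bound each factor $\mathbb{E}[e^{\lambda Y_i}]$. I would invoke Hoeffding's lemma: if $Z$ satisfies $\mathbb{E}[Z]=0$ and $Z \in [a,b]$ almost surely, then $\mathbb{E}[e^{\lambda Z}] \le \exp(\lambda^2 (b-a)^2/8)$ for all $\lambda$; since each $Y_i$ lies in an interval of width $1$, this yields $\mathbb{E}[e^{\lambda Y_i}] \le e^{\lambda^2/8}$. To prove Hoeffding's lemma I would set $\psi(\lambda) = \log \mathbb{E}[e^{\lambda Z}]$ and verify $\psi(0)=0$, $\psi'(0)=\mathbb{E}[Z]=0$, and $\psi''(\lambda) = \mathrm{Var}_{\lambda}(Z)$, the variance under the exponentially tilted law $d\mathbb{P}_\lambda \propto e^{\lambda z}\,d\mathbb{P}$; since that law is still supported in $[a,b]$, its variance is at most $(b-a)^2/4$, so a second-order Taylor expansion with remainder gives $\psi(\lambda) \le \lambda^2(b-a)^2/8$. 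Alternatively one can avoid tilting altogether by using convexity of $t \mapsto e^{\lambda t}$ to write $e^{\lambda z} \le \frac{b-z}{b-a}e^{\lambda a} + \frac{z-a}{b-a}e^{\lambda b}$, taking expectations, and bounding the resulting explicit function of $\lambda$.

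Combining the two estimates gives $\mathbb{P}(\bar X - \mathbb{E}[\bar X] \ge \gamma) \le \exp(-\lambda n \gamma + n\lambda^2/8)$ for every $\lambda > 0$; minimizing the exponent over $\lambda$ selects $\lambda = 4\gamma$ and produces the bound $\exp(-2n\gamma^2)$. Applying the identical argument to $-Y_i$ (equivalently, to the variables $1-X_i$, which also lie in $[0,1]$) controls $\mathbb{P}(\bar X - \mathbb{E}[\bar X] \le -\gamma)$ by the same quantity, and a union bound over the two one-sided events yields $\mathbb{P}(|\bar X - \mathbb{E}[\bar X]| \ge \gamma) \le 2\exp(-2n\gamma^2)$, as claimed. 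The only genuinely delicate point is Hoeffding's lemma, and within it the variance bound $\mathrm{Var}(Z) \le (b-a)^2/4$ for $Z$ supported on $[a,b]$ — which itself follows from $\mathrm{Var}(Z) \le \mathbb{E}[(Z-c)^2]$ with $c=(a+b)/2$ and $(Z-c)^2 \le (b-a)^2/4$ pointwise; the remaining steps are routine.
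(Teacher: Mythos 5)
Your argument is correct and complete: the exponential-moment (Chernoff) bound, Hoeffding's lemma via the tilted-measure variance bound (or the convexity alternative), optimization at $\lambda = 4\gamma$, and the two-sided union bound all check out, yielding exactly $2\exp(-2n\gamma^2)$. The paper itself offers no proof of this lemma --- it simply states the classical result with a citation to Hoeffding --- and your derivation is the standard argument behind that cited result, so there is nothing to reconcile.
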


\begin{lemma} \label{lem:probcleanevents}
The probability of the clean event $\mathcal{E}$, for $T \geq nk$, satisfies:
\begin{align}
    \mathbb{P}(\mathcal{E}) %
    & \geq 1 - \frac{2}{T}. \nonumber
\end{align}
\end{lemma}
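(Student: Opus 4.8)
The plan is to write the bad event as $\bar{\mathcal{E}}=\bigcup_{i=1}^{k}\bar{\mathcal{E}}_i$ and bound $\mathbb{P}(\bar{\mathcal{E}})$ by a two-level union bound: first over the at most $n$ actions considered within a phase, then over the $k$ phases. Each per-action deviation is controlled by Hoeffding's inequality (Lemma~\ref{lem:hoeffding}), using that within phase $i$ every action of $\mathcal{S}_i$ is played exactly $m$ times and that $\mathrm{rad}=\sqrt{\log(T)/m}$. The one point that needs care is that $\mathcal{S}_i$ is itself random — it depends on the earlier greedy picks $a_1,\dots,a_{i-1}$ and on the internal randomness of the sampling step that produces $\mathcal{A}_i$ — so a naive union bound over all size-$\le k$ subsets of $\Omega$ would be exponential in $n$; the fix is to condition on the past before applying Hoeffding.

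Concretely, I would fix $i$ and condition on the history $\mathcal{H}_i$ before phase $i$, consisting of the sampled sets $\mathcal{A}_1,\dots,\mathcal{A}_i$, the selected arms $a_1,\dots,a_{i-1}$, and all rewards observed in phases $1,\dots,i-1$. Given $\mathcal{H}_i$, the action set $\mathcal{S}_i$ is deterministic with $|\mathcal{S}_i|=s_i=(n-i+1)\min\{1,\beta\}\le n$. Since the reward of an action is i.i.d.\ conditioned on that action and independent of the past, for each $S\cup\{a\}\in\mathcal{S}_i$ the empirical mean $\bar f(S\cup\{a\})$ evaluated at the end of phase $i$ is an average of $m$ independent $[0,1]$-valued samples with mean $f(S\cup\{a\})$. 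Lemma~\ref{lem:hoeffding} with $m$ samples and $\gamma=\mathrm{rad}=\sqrt{\log(T)/m}$ gives $\mathbb{P}\bigl(|\bar f(S\cup\{a\})-f(S\cup\{a\})|\ge\mathrm{rad}\bigr)\le 2\exp(-2m\,\mathrm{rad}^2)=2/T^2$, and a union bound over the $\le n$ actions in $\mathcal{S}_i$ yields $\mathbb{P}(\bar{\mathcal{E}}_i\mid\mathcal{H}_i)\le 2n/T^2$. Taking expectation over $\mathcal{H}_i$ removes the conditioning, so $\mathbb{P}(\bar{\mathcal{E}}_i)\le 2n/T^2$.

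A final union bound over the $k$ phases then gives $\mathbb{P}(\bar{\mathcal{E}})\le\sum_{i=1}^{k}\mathbb{P}(\bar{\mathcal{E}}_i)\le 2nk/T^2$, hence $\mathbb{P}(\mathcal{E})\ge 1-2nk/T^2$, and the hypothesis $T\ge nk$ gives $2nk/T^2\le 2/T$, which is the claim. The main (and essentially only) obstacle is the conditioning step: one must justify that conditioning on everything determined before phase $i$ leaves the phase-$i$ rewards as fresh i.i.d.\ draws from the action distributions, so that the per-action Hoeffding bound applies with a fixed action set of size $\le n$. Once that is in place the rest is a routine calculation, and the reason the relevant count in the union bound is $\sum_{i=1}^{k}s_i\le nk$ rather than exponential in $n$ is exactly that $\mathcal{S}_i$ is $\mathcal{H}_i$-measurable and therefore frozen once we condition.
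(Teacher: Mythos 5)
Your proposal is correct and follows essentially the same route as the paper: both handle the adaptivity of the action sets by conditioning on the history before phase $i$ (the paper conditions on $S^{(i-1)}$ inside $\mathbb{P}(\mathcal{E}_i\mid\mathcal{E}_1,\dots,\mathcal{E}_{i-1})$), apply Hoeffding with $\mathrm{rad}=\sqrt{\log(T)/m}$ to get a per-action failure probability of $2/T^2$, and combine over at most $nk$ action--phase pairs before invoking $T\geq nk$. The only cosmetic difference is that you union-bound the complement to get $1-2nk/T^2$ directly, whereas the paper multiplies the conditional success probabilities to get $(1-2/T^2)^{nk}$ and then applies Bernoulli's inequality, yielding the same bound.
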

\begin{proof}

We begin by breaking up the probability of the clean event $\mathcal{E}$ into conditional probabilities for the events $\{\mathcal{E}_{i}\}_{i=1}^{k}$ for each phase,
\begin{align}
    \mathbb{P}(\mathcal{E}) &= \mathbb{P}(\mathcal{E}_{1}\cap \dots \cap \mathcal{E}_{k}) \nonumber\\
    &= \prod_{i=1}^{k} \mathbb{P}(\mathcal{E}_{i}|\mathcal{E}_{1},\dots,\mathcal{E}_{i-1}). \label{eq:probbnd:Econd} 
\end{align}

Recall that  $\mathcal{E}_{i}$ is the event where the empirical means of all actions played in phase $i$ were concentrated around their statistical means. Which actions are available in phase $i$, namely  $\{S^{(i-1)}\cup\{a\}\}_{a\in \mathcal{R}\backslash S^{(i-1)}}$, depends on the action $S^{(i-1)}$ from the previous phase that had the highest empirical mean, which in turn is related to $\mathcal{E}_{i-1}$.  Although we cannot directly evaluate \eqref{eq:probbnd:Econd}, by conditioning on  $S^{(i-1)}$ we will be able to obtain a bound on \eqref{eq:probbnd:Econd}.

\begin{align}
    \mathbb{P}(\mathcal{E}_{i}|\mathcal{E}_{1},\dots,\mathcal{E}_{i-1}) %
    &= \sum_{S\in\left\{S' \ \big| \ S'\subseteq\Omega, \ |S'|=i-1\right\}} \mathbb{P}(S^{(i-1)}=S, \mathcal{E}_{i}|\mathcal{E}_{1},\dots,\mathcal{E}_{i-1}) \tag{law of total probability} \\
    &= \sum_{S\in\left\{S' \ \big| \ S'\subseteq\Omega, \ |S'|=i-1\right\}} \mathbb{P}(S^{(i-1)}=S|\mathcal{E}_{1},\dots,\mathcal{E}_{i-1}) \times \mathbb{P}(\mathcal{E}_{i}| S^{(i-1)}=S, \mathcal{E}_{1}, \dots, \mathcal{E}_{i-1}) \nonumber \\
    &= \sum_{S\in\left\{S' \ \big| \ S'\subseteq\Omega, \ |S'|=i-1\right\}} \mathbb{P}(S^{(i-1)}=S|\mathcal{E}_{1},\dots,\mathcal{E}_{i-1}) \times \mathbb{P}(\mathcal{E}_{i}| S^{(i-1)}=S), \label{eq:eventseqprob:fact:1}
\end{align} where \eqref{eq:eventseqprob:fact:1} follows from  rewards in phase $i$ being conditionally independent of rewards from other phases, given the corresponding actions played during phase $i$.

We now focus on bounding $\mathbb{P}(\mathcal{E}_{i}| S^{(i-1)}=S)$.  By conditioning on the set chosen in the previous phase, $S^{(i-1)}=S$, we know all the actions that will be played in the current phase $i$, $\{S^{(i-1)}\cup\{a\}\}_{a\in \mathcal{R}\backslash S^{(i-1)}}$.  The rewards of all the actions are bounded in $[0,1]$ and are conditionally independent (given the corresponding action).  

Apply Lemma \ref{lem:hoeffding} to  the empirical mean $\bar{f}(S^{(i-1)}\cup\{a\})$ of $m$ rewards for action $S^{(i-1)}\cup\{a\}$  and choosing $\epsilon=\mathrm{rad}=\sqrt{\log(T)/m}$ gives 

\begin{align}
    \mathbb{P}\left[\big|\bar{f}(S^{(i-1)}\cup\{a\})-f(S^{(i-1)}\cup\{a\}) \big| \geq \mathrm{rad} \right]     &\leq 2 \mathrm{exp} \left( - 2 m \mathrm{rad}^2  \right) \nonumber\\
    &= 2 \mathrm{exp} \left( - 2 m (\log(T)/m ) \right) \nonumber\\
    &= 2 \mathrm{exp} \left( - 2 \log(T)  \right) \nonumber\\
    &= \frac{2}{T^2}. \nonumber
\end{align}

Thus, for any individual action $S^{(i-1)}\cup\{a\} \in \mathcal{S}_{i}$, we can bound the probability that its sample mean $\bar{f}(S^{(i-1)}\cup\{a\})$ is within a specified confidence radius (complementary  of the event above) as
\begin{align}
    \mathbb{P}\left[\bigg|\bar{f}(S^{(i-1)}\cup\{a\})-f(S^{(i-1)}\cup\{a\}) \bigg| < \mathrm{rad} \right] 
    &= 1-\mathbb{P}\left[\bigg|\bar{f}(S^{(i-1)}\cup\{a\})-f(S^{(i-1)}\cup\{a\}) \bigg| \geq \mathrm{rad} \right] \nonumber\\
    &\geq 1-\frac{2}{T^2}. \label{eq:probbnd:single}
\end{align}

We can then use \eqref{eq:probbnd:single} to bound  $\mathbb{P}(\mathcal{E}_{i}| S^{(i-1)}=S)$ for any set $S\subset \Omega$ of $i-1$ arms.

\begin{align}
    \mathbb{P}(\mathcal{E}_{i}| S^{(i-1)}=S) %
    &=\mathbb{P}\left[ \bigcap_{a \in \mathcal{R} \backslash S^{(i-1)}}\left\{\bigg|\bar{f}(S^{(i-1)}\cup\{a\})-f(S^{(i-1)}\cup\{a\}) \bigg| < \mathrm{rad} \right\} \bigg |S^{(i-1)}=S \right] \tag{definition of $\mathcal{E}_{i}$}\\%
    &= \prod_{a \in \mathcal{R} \backslash S^{(i-1)} } \mathbb{P}\left[ \left\{\bigg|\bar{f}(S^{(i-1)}\cup\{a\})-f(S^{(i-1)}\cup\{a\}) \bigg| < \mathrm{rad} \right\}  \bigg |S^{(i-1)}=S \right] \tag{rewards are independent conditioned on actions} \\
    &\geq \left(1-\frac{2}{T^2} \right)^{| \mathcal{R} \backslash S^{(i-1)} |} \tag{using \eqref{eq:probbnd:single}}\\
    & = \left(1-\frac{2}{T^2} \right)^{n \min\{1,\frac{\log(\frac{1}{\epsilon})}{k}\}-i+1} \nonumber\\
    & \geq \left(1-\frac{2}{T^2} \right)^{n}.
    \label{eq:probbnd:phase}
\end{align}

Using \eqref{eq:eventseqprob:fact:1} and \eqref{eq:probbnd:phase}, we are now ready to lower bound the probability of a clean event. 

\begin{align}
    \mathbb{P}(\mathcal{E}) &= \mathbb{P}(\mathcal{E}_{1}\cap \dots \cap \mathcal{E}_{k}) \nonumber\\
    &= \prod_{i=1}^{k} \mathbb{P}(\mathcal{E}_{i}|\mathcal{E}_{1},\dots,\mathcal{E}_{i-1}) \nonumber \\
    &= \prod_{i=1}^{k} \sum_{S\in\left\{S' \ \big| \ S'\subseteq\Omega, \ |S'|=i-1\right\}} \mathbb{P}(S^{(i-1)}=S|\mathcal{E}_{1},\dots,\mathcal{E}_{i-1}) \times \mathbb{P}(\mathcal{E}_{i}| S^{(i-1)}=S) \tag{ using \eqref{eq:eventseqprob:fact:1} } \\
    &\geq \prod_{i=1}^{k} \sum_{S\in\left\{S' \ \big| \ S'\subseteq\Omega, \ |S'|=i-1\right\}} \mathbb{P}(S^{(i-1)}=S|\mathcal{E}_{1},\dots,\mathcal{E}_{i-1}) \times \left(1-\frac{2}{T^2} \right)^{n} \tag{ using \eqref{eq:probbnd:phase} } \\
    &= \prod_{i=1}^{k} \left(1-\frac{2}{T^2} \right)^{n} \sum_{S\in\left\{S' \ \big| \ S'\subseteq\Omega, \ |S'|=i-1\right\}} \mathbb{P}(S^{(i-1)}=S|\mathcal{E}_{1},\dots,\mathcal{E}_{i-1})  \nonumber \\ 
    &= \prod_{i=1}^{k} \left(1-\frac{2}{T^2} \right)^{n}   \nonumber \\ 
    & =   \left(1-\frac{2}{T^2} \right)^{nk} \nonumber\\
    &\geq   1-\frac{2nk}{T^2}. \tag{Bernoulli's inequality}
\end{align}
Therefore, for $T \geq nk$
\begin{align}
    \mathbb{P}(\mathcal{E}) \geq   1-\frac{2}{T}. 
\end{align}

This concludes the proof for Lemma \ref{lem:probcleanevents}.
\end{proof}

\end{document}